%% file: main.tex
\documentclass[11pt]{article}

\usepackage[final]{acl}

\usepackage{times}
\usepackage{latexsym}
\usepackage[T1]{fontenc}
\usepackage[utf8]{inputenc}
\usepackage{microtype}
\usepackage{inconsolata}
\usepackage{placeins}

\usepackage{amsmath}
\usepackage{amssymb}
\usepackage{amsthm}
\usepackage{bm}

\newtheorem{theorem}{Theorem}

\theoremstyle{definition}
\newtheorem{definition}{Definition}

\theoremstyle{remark}
\newtheorem{remark}{Remark}

\usepackage{graphicx}
\usepackage{booktabs}
\usepackage{multirow}
\usepackage{adjustbox}
\usepackage{enumitem}
\usepackage{subcaption}
\usepackage{tikz}
\usetikzlibrary{
  arrows.meta,
  positioning,
  shapes.geometric
}
\usetikzlibrary{calc}
\usepackage{float}

\newcommand{\shownotes}{1} 
\ifnum \shownotes = 1
	\newcommand{\authnote}[2]{{$\ll$\textsf{\footnotesize #1 notes: #2}$\gg$}}
\else
	\newcommand{\authnote}[2]{}
\fi

\title{Learning Dynamic Representations and Policies from Multimodal Clinical Time-Series with Informative Missingness}

\author{
Zihan Liang\thanks{Equal contribution. Code and reproducibility materials are available at \url{https://github.com/CausalMLResearch/OPL-MT-MNAR} under the MIT License.}
\quad
Ziwen Pan\footnotemark[1]
\quad
Ruoxuan Xiong \\
Emory University, Atlanta, USA \\
\texttt{\{zihan.liang, ziwen.pan, ruoxuan.xiong\}@emory.edu}
}

\begin{document}
\maketitle

\begin{abstract}
Multimodal clinical records contain structured measurements and clinical notes recorded over time, offering rich temporal information about the evolution of patient health. Yet these observations are sparse, and whether they are recorded depends on the patient’s latent condition. Observation patterns also differ across modalities, as structured measurements and clinical notes arise under distinct recording processes. While prior work has developed methods that accommodate missingness in clinical time series, how to extract and use the information carried by the observation process itself remains underexplored. We therefore propose a patient representation learning framework for multimodal clinical time series that explicitly leverages informative missingness. The framework combines (1) a multimodal encoder that captures signals from structured and textual data together with their observation patterns, (2) a Bayesian filtering module that updates a latent patient state over time from observed multimodal signals, and (3) downstream modules for offline treatment policy learning and patient outcome prediction based on the learned patient state. We evaluate the framework on ICU sepsis cohorts from MIMIC-III, MIMIC-IV, and eICU. It improves both offline treatment policy learning and adverse outcome prediction, achieving FQE 0.679 versus 0.528 for clinician behavior and AUROC 0.886 for post-72-hour mortality prediction on MIMIC-III.
\end{abstract}

\section{Introduction}
\label{sec:intro}

Electronic health records are multimodal, consisting of structured data, such as vital signs and laboratory measurements, as well as clinical texts, such as notes and reports. These data are recorded longitudinally and encode rich temporal information about how patient health evolves over time. This makes it possible to learn dynamic patient representations that support both outcome prediction and sequential clinical decision-making. However, two key features exist in clinical observations that complicate how such representations should be learned.

\begin{figure}[t!]
\centering
\includegraphics[width=\columnwidth]{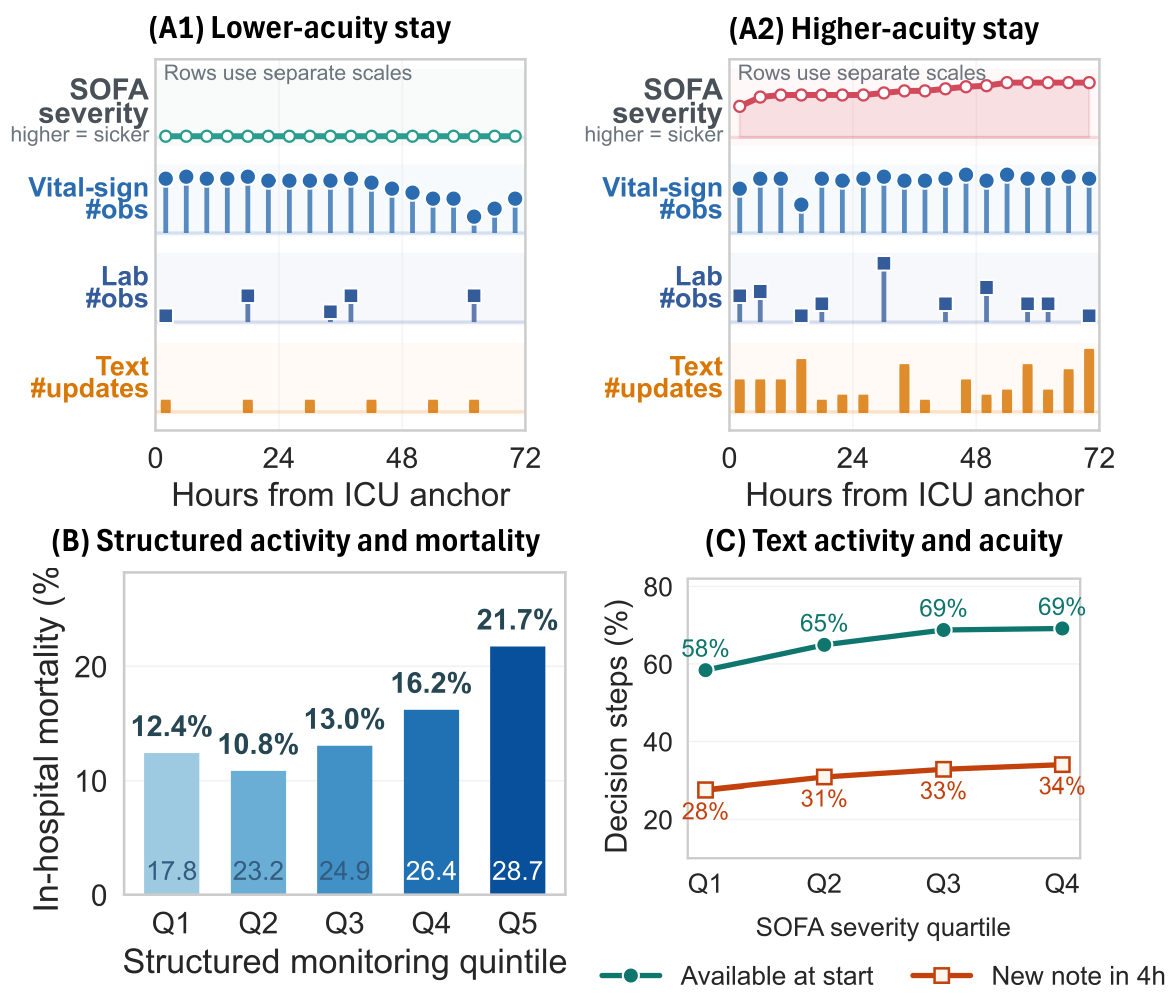}
\caption{\textbf{Text and structured observations exhibit temporal MNAR patterns in ICU care.}
\textbf{(A)} Two representative MIMIC-III ICU stays illustrate how acuity and observation processes co-evolve. The top row shows SOFA severity (higher = sicker); the lower rows show counts of vital-sign observations, laboratory observations, and text updates within each 4-hour bin. Compared with a lower-acuity stay (A1), a higher-acuity stay (A2) shows rising SOFA, denser laboratory measurements, and burstier documentation.
\textbf{(B)} Across ICU stays, greater structured monitoring intensity is associated with higher in-hospital mortality.
\textbf{(C)} Clinical text follows the same endogenous pattern: higher-acuity decision steps are more likely to already have text available and to receive new text within the next 4 hours.}
\label{fig:intro}
\end{figure}

First, they are often sparse and irregular, and their observation process depends on both clinician decisions and the patient’s underlying health state. Figure~\ref{fig:intro}(A) illustrates this pattern using two representative ICU trajectories from MIMIC-III~\citep{johnson2016mimic}: compared with a lower-acuity trajectory, a higher-acuity trajectory exhibits denser laboratory monitoring and more frequent text updates. The same pattern also appears at the cohort level. Using MIMIC-IV~\citep{johnson2023mimic}, Figure~\ref{fig:intro}(B) shows that greater structured monitoring intensity is associated with higher in-hospital mortality. Figure~\ref{fig:intro}(C) shows a parallel relationship for text: higher-acuity patients are more likely to have text available at a given decision step and to receive new text in the following step.

Second, observation patterns differ systematically across modalities because different types of clinical data are generated through different mechanisms. Vital signs are often collected more routinely, laboratory tests need to be ordered, and text updates depend even more directly on clinician documentation behavior. This contrast is visible in Figure~\ref{fig:intro}(A): even within the same patient trajectory, the temporal availability of structured measurements and text updates evolves differently.

Taken together, these patterns suggest that observation processes are informative about patient state but should be used carefully, as their meanings differ across modalities. For structured clinical time series, prior work has proposed methods that incorporate informative missingness through masks and time gaps~\citep{che2018recurrent}. In the multimodal setting, \citet{liang2025causal} also explicitly models informative missingness, but without temporal dynamics. However, it remains open how to leverage informative missingness over time across modalities to learn patient representations.

We propose OPL-MT-MNAR (Off-Policy Learning under Multimodal Observations with Temporal Missing-Not-At-Random Patterns), a framework that explicitly leverages informative missingness in multimodal clinical records. Our approach has two stages. In the first stage, we learn a dynamic latent representation from the multimodal observations available up to the current time. In the second stage, we use the learned patient representation for outcome prediction and offline policy learning.

The first stage is motivated by \textit{Bayesian filtering} and consists of two components. The first is a multimodal encoder that learns a unified representation from the data observed so far. For structured data, we construct the embedding using an extension of Gated Recurrent Units-Decay (GRU-D)~\citep{che2018recurrent} together with additional missingness-aware features (time since last observation, cumulative observation counts, missing rates, and windowed observation frequency). For clinical text, we introduce a temporal documentation factor that is updated at each time step and summarizes the observation pattern of the text observed so far. This factor is then used both to refine the text embedding and to guide the fusion of text and structured-data embeddings into a unified representation.

The second component models \textit{patient dynamics} through a latent belief state learned via variational inference. This belief state captures underlying health dynamics together with the cumulative effects of past treatment actions. It is then combined with the unified representation from the multimodal encoder to form a posterior patient state for downstream tasks. We \textit{show theoretically} that without such a belief state, the multimodal encoder alone may fail to preserve sufficient information about treatment history for sequential decision-making.

In the second stage, we use the posterior patient state for multiple downstream tasks. One task is offline treatment policy optimization using expectile regression~\citep{kostrikov2022offline}, which accommodates delayed rewards. The other is outcome prediction. Jointly learning these tasks allows the shared patient representation to benefit from positive transfer across tasks.

We evaluate our framework on MIMIC-III, MIMIC-IV, and eICU~\citep{pollard2018eicu}, which together cover complementary text-observation regimes and cross-institutional generalization. Our empirical focus is ICU sepsis care with 4-hour decision steps over the first 72 ICU hours. Across these settings, the learned state supports both clinically meaningful prediction and offline treatment optimization, with the largest gains appearing in high-acuity settings where observation processes are most informative. Concretely, the OPL-MT-MNAR policy achieves FQE 0.679 on MIMIC-III, 0.634 on MIMIC-IV, and 0.604 on eICU, compared with clinician behavior at 0.528, 0.521, and 0.534, respectively. Using the same learned representation, we also achieve AUROC 0.886 for post-72-hour mortality prediction on MIMIC-III, with the clearest policy improvements appearing in the highest-acuity subgroup.

\section{Problem Formulation}
\label{sec:problem}
We consider a finite-horizon partially observable Markov decision process (POMDP) $\mathcal{M} = (\mathcal{S}, \mathcal{A}, \mathcal{O}, P, \Omega, R, \gamma, H)$~\citep{kaelbling1998planning,hauskrecht2000planning}, where $\mathcal{S}$ is the latent state space, $\mathcal{A}$ is the discrete action space, $\mathcal{O}$ is the observation space, $P: \mathcal{S} \times \mathcal{A} \to \Delta(\mathcal{S})$ is the transition kernel, $\Omega: \mathcal{S} \to \Delta(\mathcal{O})$ is the observation function, $R$ is the reward function, $\gamma \in [0,1)$ is the discount factor, and $H$ is the horizon. The true patient state $s_h \in \mathcal{S}$ at decision step $h$ is not directly observed.

At decision step $h$, the agent \emph{partially observes} the following information
\[
o_h = (\bm{y}_h^{\mathsf{s}}, \bm{m}_h^{\mathsf{s}}, \bm{y}_h^{\mathsf{t}}, \bm{m}_h^{\mathsf{t}}) \in \mathcal{O}.
\]
For structured data, $\bm{y}_h^{\mathsf{s}} \in \mathbb{R}^{|\mathcal{T}_h| \times D}$ denotes the measurement values recorded at observation times $\mathcal{T}_h$ within decision step $h$, and $\bm{m}_h^{\mathsf{s}} \in \{0,1\}^{|\mathcal{T}_h| \times D}$ indicates whether each entry is observed. For text data, $\bm{y}_h^{\mathsf{t}} = \{y_{h}^{\mathsf{t},j}\}_{j \in \mathcal{M}^{\mathsf{t}}}$ denotes the collection of raw text observations across modalities, and $\bm{m}_h^{\mathsf{t}} = [m_{h}^{\mathsf{t},j}]_{j \in \mathcal{M}^{\mathsf{t}}}$ is the corresponding binary indicator vector, where $m_{h}^{\mathsf{t},j}$ records whether text modality $j$ is observed at step $h$. These raw text observations are encoded into a step-level text embedding $e_h^{\mathsf{t}} \in \mathbb{R}^{d_e}$ before entering the multimodal fusion module.

In addition, static patient features $x$ (e.g., age and gender) are available throughout the trajectory. We let $I_h = \{x, o_1, \cdots, o_h\}$ be the information set accumulated up to step $h$, comprising static features together with structured and textual information collected dynamically.

Episodes may terminate at step $h^* \leq H$ if the patient dies, is discharged, or reaches the end of the observation window. The logged episode ends at $h^*$, so no observations, actions, or rewards are defined for $h > h^*$. Rewards are sparse within the realized episode: $r_{h^*} = +1$ for survival and $-1$ for in-hospital mortality, while $r_h = 0$ for $h < h^*$. 

To quantify observation patterns, we additionally define modality-specific summary statistics. For structured data, let $\bm{\delta}_h^{\mathsf{s}} \in \mathbb{R}_+^{|\mathcal{T}_h| \times D}$ denote the time since last observation for each variable at each timestamp within decision step $h$. For text data, let $\bm{n}_h^{\mathsf{t}} = [n_{h}^{\mathsf{t},j}] \in \mathbb{Z}_+^{|\mathcal{M}^{\mathsf{t}}|}$ denote the number of text observations in modality $j$ at step $h$; for example, a single step may contain multiple nursing notes. We further define the documentation density
\[ \kappa_h^{\mathsf{t}} = \frac{1}{K}\sum_{u=h-K+1}^{h} \sum_{j \in \mathcal{M}^{\mathsf{t}}} n_{u}^{\mathsf{t},j},\]
which summarizes recent documentation activity over a rolling window of length $K$. Together, these summaries capture behavior-driven observation timing, documentation burstiness, and \textit{missing-not-at-random} (MNAR) patterns that may correlate with patient severity.  Appendix~\ref{app:notation} summarizes the notation used throughout the paper.

\paragraph{Learning Objectives.}
Given the static dataset $\mathcal{D} = \{(o_h^{(i)}, a_h^{(i)}, r_h^{(i)})\}_{i,h}$ collected under behavior policy $\pi_\beta$ (clinician decisions), we aim to achieve three interconnected objectives:

\textit{Q1 (State Learning).} Learn an encoder $g_\theta$ such that the state $s_h = g_\theta(I_h)$ captures sufficient information from multimodal observations with MNAR patterns and behavior-driven text observations. We verify state quality through reconstruction: a decoder $f_\phi$ should recover the step-level observations $(\bm{y}_h^{\mathsf{s}}, \bm{m}_h^{\mathsf{s}})$, ensuring that MNAR information is preserved in the learned representation.

\textit{Q2 (Policy Optimization).} Under the offline constraint, learn a policy $\pi(a_h \mid s_h)$ that maximizes expected return while avoiding catastrophic errors from out-of-distribution actions.

\textit{Q3 (Outcome Prediction).} Predict clinical outcomes (e.g., post-72-hour mortality) from the terminal state representation. This grounds learned states in clinically meaningful signals.

\section{Method}
\label{sec:method}

We propose a two-stage framework as shown in Figure~\ref{fig:architecture}. Stage 1 (Section~\ref{sec:stage1}) learns patient health state representations from multimodal observations with structured-measurement MNAR and text MNAR; Stage 2 (Section~\ref{sec:stage2}) uses these states to optimize treatment policies via Implicit Q-Learning and to predict outcomes. 

\begin{figure*}[t]
\centering
\includegraphics[width=\textwidth]{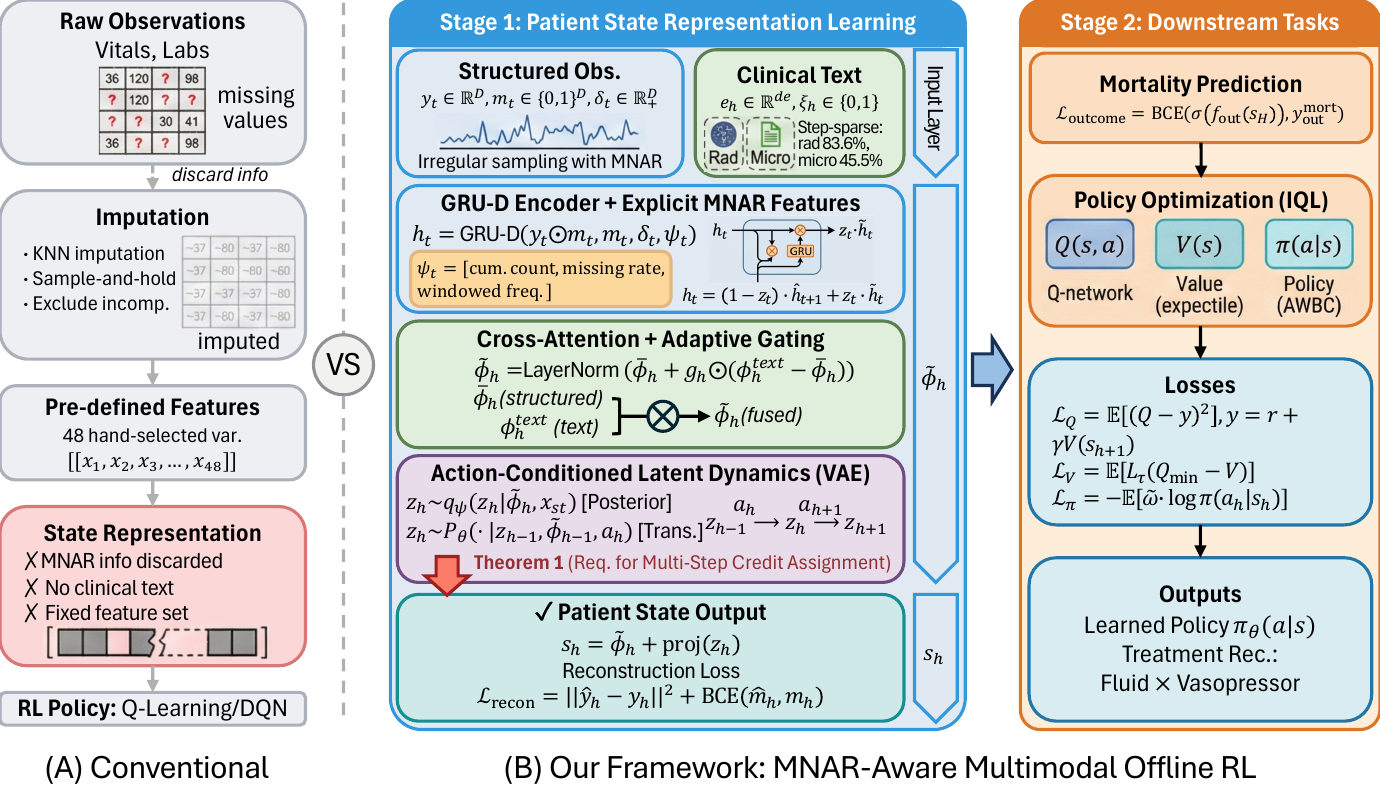}
\caption{OPL-MT-MNAR: \textbf{Stage 1} learns state $s_h$ with structured-measurement MNAR, documentation-process MNAR, and action-conditioned latent dynamics; \textbf{Stage 2} uses $s_h$ for outcome prediction and policy optimization.}
\label{fig:architecture}
\end{figure*}

\subsection{Patient State Representation Learning}
\label{sec:stage1}

To learn patient state, we adopt a \textit{Bayesian filtering} perspective with the causal diagram shown in Figure \ref{fig:causal}. At each decision step $h$, we first encode the multimodal observations into a unified representation $\phi_h$ that explicitly captures both structured-measurement MNAR and text MNAR (Section~\ref{sec:obs_embedding}). We then maintain a \textit{latent belief state} $z_h$, learned via \textit{variational inference}, whose transition to $z_{h+1}$ is conditioned on the treatment action (Section~\ref{sec:latent_dynamics}). Finally, we construct the \textit{posterior patient health state representation} $s_h$ by combining the current observation representation $\phi_h$ with the latent belief state $z_h$ (Section~\ref{sec:patient_state}). The resulting $s_h$ is used for downstream tasks.

\subsubsection{MNAR-Aware Observation Embedding}
\label{sec:obs_embedding}

We construct a unified representation $\phi_h$ through two steps: first encoding irregular structured observations with explicit MNAR modeling to obtain $\bar{\phi}_h^{\mathsf{s}}$, then incorporating clinical text together with its documentation process to produce $\phi_h$. 

\paragraph{Structured Observation Encoding.}

For structured observations, we use an encoder built on GRU-D~\citep{che2018recurrent}. The key idea is that if a variable has been missing for a long time, then its last observed value becomes less reliable, and the influence of stale information should gradually decay over time. To capture this effect, we introduce learned hidden-state and input decay factors at timestamp $u$ as functions of the time-since-last-observation vector:
\begin{equation*}
\begin{aligned}
\xi_{\phi,u} &= \exp\big({-}\max(0, W_{\phi,\xi} \cdot \mathrm{mean}(\delta_u) + b_{\phi,\xi})\big), \\
\bm{\xi}_{y,u} &= \exp\big({-}\max(0, W_{y,\xi} \delta_u + b_{y,\xi})\big),
\end{aligned}
\end{equation*}
where $\xi_{\phi,u} \in (0,1]$ controls hidden-state decay, $\bm{\xi}_{y,u} \in (0,1]^D$ controls input decay, $\xi_{y,u}^d$ denotes the $d$-th entry of $\bm{\xi}_{y,u}$, and all weights and biases are trainable parameters. 

Let $y_u^{\mathsf{s},d}$ denote the value of the $d$-th structured variable at time $u$, and let $m_u^{\mathsf{s},d}\in\{0,1\}$ indicate whether that value is observed. When a measurement is missing, we decay its last observed value toward a default value given by the empirical mean
\begin{align*}
    \hat{y}_u^{\mathsf{s},d} =& m_u^{\mathsf{s},d} \cdot y_u^{\mathsf{s},d} + (1 - m_u^{\mathsf{s},d}) \cdot \\
    & \big(\xi_{y,u}^d \cdot y_{u'}^{\mathsf{s},d} + (1-\xi_{y,u}^d) \cdot \mu^d \big),
\end{align*}
where $y_{u'}^{\mathsf{s},d}$ is the most recent observed value of the $d$-th variable before time $u$, and $\mu^d$ is the empirical mean of that variable.

Let $\phi_{u-1}^{\mathsf{s}}$ denote the hidden state from the previous timestamp. We first apply hidden-state decay, $\hat{\phi}^{\mathsf{s}}_{u-1} = \xi_{\phi,u} \odot \phi^{\mathsf{s}}_{u-1}$,
and then update the hidden state using GRU-D-style gates. In addition to the decayed inputs, we incorporate explicit MNAR features $\bm{\psi}_u^{\mathsf{s}}$ that summarize monitoring patterns predictive of patient severity, including cumulative observation counts, missing rates, and windowed observation frequencies. The gated updates are
\begin{align*}
    r_u =& \sigma(W_{r,y} \hat{y}_u^{\mathsf{s}} + W_{r,\phi} \hat{\phi}^{\mathsf{s}}_{u-1} + W_{r,\psi} \bm{\psi}_u^{\mathsf{s}} + b_r ) \\
    \eta_u =& \sigma(W_{\eta,y} \hat{y}_u^{\mathsf{s}} + W_{\eta,\phi} \hat{\phi}^{\mathsf{s}}_{u-1} + W_{\eta,\psi} \bm{\psi}_u^{\mathsf{s}} + b_\eta ) \\
    \tilde{\phi}^{\mathsf{s}}_u =&\mathrm{tanh}( W_{y} \hat{y}_u^{\mathsf{s}} + W_\phi (r_u \odot \hat{\phi}^{\mathsf{s}}_{u-1}) + W_\psi \bm{\psi}_u^{\mathsf{s}} + b) \\
    \phi_u^{\mathsf{s}} =& (1 - \eta_u) \odot \hat{\phi}^{\mathsf{s}}_{u-1} + \eta_u \odot \tilde{\phi}^{\mathsf{s}}_u \,,
\end{align*}
where $\sigma$ denotes the sigmoid function
and all $W$ matrices and $b$ vectors are trainable parameters. For decision step $h$, we take the hidden state at the last timestamp within the step as the structured embedding $\bar{\phi}_h^{\mathsf{s}}$. See Appendix~\ref{app:encoder} for more details. 

\paragraph{Sparse Text Fusion.}

Clinical text is highly informative but is observed irregularly, and its availability is itself shaped by clinician documentation behavior. We therefore begin by introducing a \textit{documentation-process factor} to summarize this behavior over time. At each decision step $h$, this factor is updated from note presence, text recency, and recent documentation density:
\begin{equation*}
\begin{aligned}
\eta_h^{\mathsf{t}} &= \operatorname{MLP}\big(\bm{m}_h^{\mathsf{t}}, \delta_h^{\mathsf{t}}, \kappa_h^{\mathsf{t}}\big) \\
F_h^{\text{doc}} &= \operatorname{GRU}\big(F_{h-1}^{\text{doc}}, \eta_h^{\mathsf{t}}\big),
\end{aligned}
\end{equation*}
where $\eta_h^{\mathsf{t}}$ is a step-level summary of the text observation pattern, and $F_h^{\text{doc}}$ accumulates these signals over time. Importantly, this documentation-process factor is constructed only from the observation process and does not directly use text content.

Next, we construct a representation of the available text at decision step $h$. To align textual information with the structured representation $\bar{\phi}_h^{\mathsf{s}}$, we obtain the representation $\phi_h^{\mathsf{t}}$ by applying multi-head cross-attention from $\bar{\phi}_h^{\mathsf{s}}$ to embedding $e_h^{\mathsf{t}}$:
\begin{equation*}
\begin{aligned}
\phi_h^{\mathsf{t}} &= \operatorname{MultiHead}\Big(W_Q \bar{\phi}_h^{\mathsf{s}},\,W_K e_h^{\mathsf{t}},\,W_V e_h^{\mathsf{t}}\Big).
\end{aligned}
\end{equation*}
Here, $W_Q$, $W_K$, and $W_V$ are trainable projection matrices. When a text modality is unavailable, we use a learned missing embedding rather than dropping that modality entirely.

Finally, we adaptively fuse the structured representation $\bar{\phi}_h^{\mathsf{s}}$ and the text representation $\phi_h^{\mathsf{t}}$ using the documentation-process factor $F_h^{\text{doc}}$. This allows the model to weigh text based on its semantic content and how it is documented. Specifically, we use the following gating mechanism:
\begin{equation*}
\begin{aligned}
\hat{\phi}_h^{\mathsf{t}} &= \phi_h^{\mathsf{t}} + W_d F_h^{\text{doc}}, \\
g_h &= \sigma\big(W_g [\bar{\phi}_h^{\mathsf{s}}; \hat{\phi}_h^{\mathsf{t}}; F_h^{\text{doc}}] + b_g\big), \\
\phi_h &= \text{LayerNorm}\big(\bar{\phi}_h^{\mathsf{s}} + g_h \odot (\hat{\phi}_h^{\mathsf{t}} - \bar{\phi}_h^{\mathsf{s}})\big).
\end{aligned}
\end{equation*}
Here, $\hat{\phi}_h^{\mathsf{t}}$ augments the text representation with documentation-process information. The gate $g_h$ adaptively controls how much the \emph{unified representation} $\phi_h$ should move from the structured embedding toward the text-enhanced representation. In this way, the model can distinguish between settings such as no text, stale text, and a burst of newly updated text, even when the underlying text content is similar. See Appendix~\ref{app:text_fusion} for more details.

\subsubsection{Latent Belief State via Variational Inference}
\label{sec:latent_dynamics}
The unified representation $\phi_h$ summarizes the information available at the current decision step. However, it may not fully capture the underlying health dynamics or the cumulative effects of past treatment actions. To address this limitation, we use Bayesian filtering and introduce a latent belief state $z_h$ with action-conditioned dynamics. See Appendix~\ref{app:partial_obs} for illustrative healthcare examples.

\input{diagram}

By conditioning on action $a_h$, the transition from $z_h$ to $z_{h+1}$ captures how intervention histories shape patient trajectories and treatment responsiveness. In contrast, if $z_{h+1}$ depends only on the previous belief state $z_h$ and the current representation $\phi_{h+1}$, but not on $a_h$, then the resulting model may fail to support policies that optimize long-term rewards. The key reason is that, under the causal structure in Figure~\ref{fig:causal}, $\phi_{h+1}$ is a deterministic function of the recorded observations, which implies $\partial \phi_{h'} / \partial a_h = 0$ for all future steps $h' > h$.

\begin{definition}[Action-Independent Dynamics]
\label{def:action_independent}
In the offline RL setting where $\partial \phi_{h'} / \partial a_h = 0$ (as established above), a latent dynamical system has \emph{action-independent dynamics} if the transition function satisfies $z_{h+1} = f(z_h, \phi_h, \omega_h)$ with $\omega_h \perp\!\!\!\perp a_h$, where $\omega_h$ is exogenous noise independent of the learned policy's action.
\end{definition}

\begin{theorem}[Necessity of Action-Conditioning]
\label{thm:controllability}
Let the policy objective be $J(\pi) = \mathbb{E}_{\tau \sim \pi}[\sum_{h=0}^{H-1} \xi^h r_h]$ where $r_h = R(s_h, a_h)$ and $s_h = g_\theta(\phi_h, z_h)$ for a differentiable combination function $g_\theta$. Under action-independent dynamics (Definition~\ref{def:action_independent}), the policy gradient satisfies:
$$\frac{\partial}{\partial \pi} \mathbb{E}\left[\sum_{h'=h+1}^{H-1} \gamma^{h'} r_{h'} \;\Big|\; s_h, a_h\right] = 0 \quad \text{for all } h.$$
That is, current actions have no gradient signal from future rewards.
\end{theorem}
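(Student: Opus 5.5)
The plan is to unroll the latent recursion and show that every future state $s_{h'}$, $h'>h$, is a function of $a_h$ only through channels that the definition of action-independent dynamics closes off; the future reward terms are then constant in $a_h$.

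\emph{Step 1 (unrolling the recursion).} Under Definition~\ref{def:action_independent}, unroll $z_{h+1}=f(z_h,\phi_h,\omega_h)$ to write, for each $h'>h$,
\[
z_{h'} = F_{h,h'}\big(z_h,\phi_h,\ldots,\phi_{h'-1},\omega_h,\ldots,\omega_{h'-1}\big)
\]
for a composed map $F_{h,h'}$. Hence $s_{h'}=g_\theta(\phi_{h'},z_{h'})$ is a deterministic function of $z_h$, the future representations $\phi_{h},\ldots,\phi_{h'}$, and the noise $\omega_h,\ldots,\omega_{h'-1}$. None of these arguments is $a_h$.

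\emph{Step 2 (no derivative through any argument).} By the standing assumption $\partial\phi_{h'}/\partial a_h=0$ for $h'>h$, and since $\phi_h$ and $z_h$ are fixed by conditioning on $s_h$, all representation arguments have zero derivative in $a_h$. The noise satisfies $\omega_{h''}\perp\!\!\!\perp a_h$, so its conditional law does not depend on $a_h$ or on the policy's choice at $h$. The chain rule through the differentiable $g_\theta$ and $f$ then gives $\partial s_{h'}/\partial a_h=0$. The conditional distribution of $s_{h'}$ given $(s_h,a_h)$ is therefore invariant to $a_h$.

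\emph{Step 3 (rewards and the policy gradient).} For $h'>h$, write $r_{h'}=R(s_{h'},a_{h'})$ with $a_{h'}\sim\pi(\cdot\mid s_{h'})$, so that
\[
\mathbb{E}[r_{h'}\mid s_h,a_h]=\mathbb{E}\Big[\textstyle\sum_a \pi(a\mid s_{h'})R(s_{h'},a)\Big].
\]
Here the outer expectation is over a law of $s_{h'}$ that does not depend on $a_h$. Sum over $h'$ with weights $\gamma^{h'}$. The parameters of $\pi$ that govern the action at step $h$, which is what $\partial/\partial\pi$ targets when we speak of ``current actions'', enter this expression only through $a_h$. Since the expression does not depend on $a_h$, the derivative vanishes. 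This yields the displayed identity and the interpretation that current actions get no gradient signal from future rewards.

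\emph{Main obstacle.} The hard part is making the meaning of $\partial/\partial\pi$ precise. If $\pi$ is shared across time steps, the future terms still depend on $\pi$ through $\pi(a\mid s_{h'})$ at later steps. The statement must therefore be read as the gradient with respect to the decision made at step $h$, for example by perturbing $\pi$ at step $h$ only, or by taking a pathwise derivative through $a_h$. I would fix this convention explicitly at the start and then argue that $\partial a_h/\partial\pi$ multiplies a zero quantity. A second subtlety is reconciling the discount notation: the objective uses $\xi^h$ and the claim uses $\gamma^{h'}$. I would note that the argument is independent of the weights, since each future term is separately invariant to $a_h$.
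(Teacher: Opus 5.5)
Your proposal is correct and follows the paper's proof. Both arguments get $\partial s_{h'}/\partial a_h=0$ from the same two pathways: the observations $\phi_{h'}$ are fixed in the offline data, and the action-independent recursion $z_{h+1}=f(z_h,\phi_h,\omega_h)$ propagates forward, which you make explicit by unrolling; both then push this through $R$ to conclude that future rewards give no gradient signal to the step-$h$ action. Your marginalization over $a_{h'}\sim\pi(\cdot\mid s_{h'})$ and your caveat that $\partial/\partial\pi$ must mean the derivative through the step-$h$ decision (a shared $\pi$ still affects the later terms) are refinements the paper leaves implicit, and one justification needs fixing: $\phi_h,z_h$ are unaffected by $a_h$ because they lie upstream of it and $a_h\sim\pi(\cdot\mid s_h)$ sees them only through $s_h$, not because conditioning on the generally non-injective $s_h=g_\theta(\phi_h,z_h)$ pins them down.
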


With terminal-only rewards, this implies the policy gradient is zero for all non-terminal steps, making it impossible to learn that early interventions affect long-term outcomes. The proof is in Appendix~\ref{app:theory}. Importantly, this theorem does not compete with MNAR-aware observation modeling: richer observation encoding improves what the model \emph{sees} at step $h$, while action-conditioned latent dynamics preserve what the policy can still \emph{learn} from future rewards.

\paragraph{VAE Formulation.}
We then parameterize $z_h$ using a variational autoencoder (VAE):
\begin{equation}
\label{eq:dynamics}
\begin{aligned}
z_{h+1} &\sim p_\theta(z_{h+1} \mid z_h, \phi_h, a_h) \\
&= \mathcal{N}\big(\mu_\theta(z_h, \phi_h, a_h), \sigma_\theta^2(z_h, \phi_h, a_h)\big)
\end{aligned}
\end{equation}
where $\mu_\theta$ and $\sigma_\theta$ are parameterized by neural networks, and $z_0 \sim \mathcal{N}(0, I)$. The posterior $q_\psi(z_{h+1} \mid \phi_{h+1}, x)$ incorporates the next-step observations during training. A dynamics loss $\mathcal{L}_{\text{dyn}}$ enforces consistency between predicted and inferred states. Appendix~\ref{app:dynamics} provides the full VAE architecture, dynamics loss, and KL regularization details.

\subsubsection{Posterior Patient State Representation}
\label{sec:patient_state}

The final patient health state $s_h$ combines unified observation representation $\phi_h$ with the latent belief $z_h$ via a learnable combination function:
\begin{equation*}
s_h = g_\theta(\phi_h, z_h),
\end{equation*}
for a parametrized function $g_\theta$. In our implementation, we use a residual additive form $g_\theta(\phi_h, z_h) = \phi_h + \text{proj}(z_h)$, where $\text{proj}(\cdot)$ is a linear projection. This choice preserves the representation $\phi_h$ while augmenting it with belief $z_h$ that contains action-related information. Because $z_h$ is latent and stochastic under Eq.~\eqref{eq:dynamics}, the induced state $s_h$ is also stochastic conditional on the observation history $I_h$; throughout the paper, losses involving $s_h$ are therefore understood as expectations over the corresponding latent-state draws.

\paragraph{State Verification via Reconstruction.}
To verify that the learned state captures sufficient information (Q1), we use a reconstruction objective during representation pre-training: 
\begin{equation*}
\begin{aligned}
\mathcal{L}_{\text{recon}} &= \lambda_{\text{obs}} \|\tilde{\bm{y}}_h^{\mathsf{s}} - \bm{y}_h^{\mathsf{s}}\|^2 + \lambda_{\text{mask}} \operatorname{BCE}(\tilde{\bm{m}}_h^{\mathsf{s}}, \bm{m}_h^{\mathsf{s}}) \\
&\quad + \lambda_{\text{text}} \Big(\|\tilde{e}_h^{\mathsf{t}} - e_h^{\mathsf{t}}\|^2 + \|\tilde{\eta}_h^{\mathsf{t}} - \eta_h^{\mathsf{t}}\|^2\Big),
\end{aligned}
\end{equation*}
where $\tilde{\bm{y}}_h^{\mathsf{s}}$, $\tilde{\bm{m}}_h^{\mathsf{s}}$, $\tilde{e}_h^{\mathsf{t}}$, and $\tilde{\eta}_h^{\mathsf{t}}$ are reconstructed from $s_h$ using MLP-based decoders. The reconstruction terms for $\tilde{\bm{y}}_h^{\mathsf{s}}$ and $\tilde{e}_h^{\mathsf{t}}$ encourage the state to preserve sufficient information about the structured observations and encoded text content. The term for $\tilde{\bm{m}}_h^{\mathsf{s}}$ encourages the state to retain structured-data MNAR patterns, while the term for $\tilde{\eta}_h^{\mathsf{t}}$ ensures that the text documentation process is also captured.

\subsection{Policy Learning and Outcome Prediction}
\label{sec:stage2}

We next use the learned state representations from Stage 1 for two downstream tasks: treatment policy optimization and adverse outcome prediction.

\subsubsection{Offline Policy Optimization}
\label{sec:policy}

We adopt Implicit Q-Learning (IQL)~\citep{kostrikov2022offline} for policy optimization. We first learn value functions from offline data without querying out-of-distribution actions. Next, we extract a policy via advantage-weighted behavioral cloning.

\paragraph{Value Function Learning.}
To mitigate overestimation bias, we use double Q-learning~\citep{hasselt2010double} and maintain two Q-networks:
\begin{equation*}
\begin{aligned}
\mathcal{L}_{Q_j} &= \mathbb{E}_{(s_h,a_h,r_h,s_{h+1},d_h)\sim\mathcal{D}}\Big[\big(Q_j(s_h,a_h) - y_h^{\text{tgt}}\big)^2\Big]
\end{aligned}
\end{equation*}
for $j \in \{1,2\}$, where both critics are trained using the same bootstrap target
\[y_h^{\text{tgt}} = r_h + \gamma (1-d_h) V_{\text{tgt}}(s_{h+1}),\]
$d_h \in \{0,1\}$ indicates whether the episode terminates at step $h$, and $V_{\text{tgt}}$ is a slowly updated target copy of the value function $V$ used to stabilize bootstrapping (Appendix~\ref{app:iql_details}). The expectation is taken over one-step transitions in the offline dataset, so the summation over decision steps is implicit in the dataset average. 
Value function $V$ is trained via expectile regression~\citep{kostrikov2022offline}:
\begin{equation*}
\mathcal{L}_V = \mathbb{E}_{(s_h, a_h) \sim \mathcal{D}} \Big[\big|\tau - \mathbb{I}(A_h < 0)\big| \cdot A_h^2\Big],
\end{equation*}
where $A_h = \min(Q_1(s_h,a_h), Q_2(s_h,a_h)) - V(s_h)$ is the advantage and $\tau \in (0.5, 1)$ is the expectile parameter. This asymmetric loss pushes $V$ toward higher Q-values while grounded in the data distribution.
\paragraph{Policy Extraction.}
The policy is extracted via advantage-weighted behavioral cloning:
\begin{equation*}
\begin{aligned}
\mathcal{L}_{\pi} &= -\mathbb{E}_{(s_h,a_h)\sim\mathcal{D}}\bigl[
\min\bigl(\exp(A_h/\beta), w_{\max}\bigr)\, \\
&\qquad\qquad\qquad\qquad \log \pi(a_h \mid s_h)\bigr],
\end{aligned}
\end{equation*}
where $\beta > 0$ controls deviation from clinician behavior and $w_{\max}$ prevents large weights. This formulation stays close to the behavior policy while improving on it. This is essential when a distribution shift leads to harmful recommendations.

\paragraph{Action Selection.} We first draw the latent belief $z_h$ from its predictive distribution, obtain the patient state $s_h = g_\theta(\phi_h, z_h)$, and then sample an action from $\pi(\cdot \mid s_h)$. Equivalently, the induced action distribution conditional on the available history is the marginal
\[\pi(a_h \mid I_h) = \int \pi\!\left(a_h \mid g_\theta(\phi_h, z_h)\right) p(z_h \mid I_h)\,dz_h,\]
which makes the uncertainty in $z_h$ explicit while keeping the notation in the policy and value losses compact. See more details in Appendix~\ref{app:iql_details}. 

\subsubsection{Outcome Prediction}
\label{sec:outcome}
Among patients who remain alive through the 72-hour observation window ($H_i = H$ and $r_H = +1$), we predict subsequent clinical outcomes $y_{\mathrm{out}}^{(1)}, \ldots, y_{\mathrm{out}}^{(K)}$. Our primary outcome is post-72-hour in-hospital mortality. This auxiliary target is clinically meaningful while remaining distinct from the RL reward. As a result, early-terminated episodes contribute to representation learning and policy optimization, but not to this auxiliary prediction task. We define the multitask outcome prediction loss as
$$\mathcal{L}_{\text{out}} = \sum_{k=1}^{K} \lambda_k \, \ell_k\!\left(f_{\text{out}}^{(k)}(s_H), y_{\mathrm{out}}^{(k)}\right),$$
where $f_{\text{out}}^{(k)}$ denotes the prediction head for outcome $k$, $\ell_k$ is an appropriate loss function (e.g., binary cross-entropy for classification), and $\lambda_k$ controls the relative weight of each outcome. 

\begin{remark}[Information transfer across tasks]
Empirically, we observe that jointly learning policy optimization and outcome prediction improves performance on both tasks. This is because the RL reward evaluates the long-term quality of treatment decisions, whereas the auxiliary prediction task provides a more direct, less noisy supervisory signal for learning $s_h$. As a result, the two tasks exhibit positive transfer, yielding a patient representation $s_h$ that is both clinically meaningful and useful for treatment optimization.
\end{remark}

\subsection{End-to-End Training Procedure}
\label{sec:training}

We adopt a three-stage procedure: (1) pre-train the encoder, dynamics model, and outcome predictor with reconstruction and auxiliary losses; (2) freeze the encoder and train RL components to prevent representation drift; (3) jointly fine-tune all components with reduced encoder learning rate. We monitor policy entropy to detect and recover from collapse. Appendix~\ref{app:training} reports the training schedule, loss weights, and posterior-collapse diagnostics, and Appendix~\ref{app:compute} summarizes computational cost.

\begin{table*}[t]
\centering
\small
\setlength{\tabcolsep}{3pt}
\begin{adjustbox}{max width=\linewidth}
\begin{tabular}{lllccc}
\toprule
\multicolumn{3}{c}{} &
\textbf{MIMIC-III} &
\textbf{MIMIC-IV} &
\textbf{eICU} \\
\cmidrule(lr){4-6}
& \textbf{Method} & \textbf{Info} & \textbf{Test FQE} & \textbf{Test FQE} & \textbf{Test FQE} \\
\midrule
\multirow{8}{*}{\rotatebox{90}{\textit{Baselines}}}
& Continuous State-Space DDQN~\citep{raghu2017continuous} & Model-free & 0.476          & 0.483          & 0.469 \\
& AI Clinician~\citep{komorowski2018artificial}      & Model-free & 0.487          & 0.491          & 0.478 \\
& BCQ~\citep{fujimoto2019off}                         & Model-free & 0.452          & 0.458          & 0.448 \\
& CQL~\citep{kumar2020conservative}                   & Model-free & 0.411          & 0.418          & 0.408 \\
& DDPG with Clinician Supervision~\citep{huang2022continuous} & Model-free & 0.529          & 0.538          & 0.524 \\
& SBCQ~\citep{fatemi2022semimarkov}                   & Model-free & 0.501          & 0.508          & 0.494 \\
& MedDreamer~\citep{xu2025meddreamer}                 & Model-based & 0.583         & 0.591          & 0.579 \\
& \textit{Clinician (Behavior)}                       & Behavior   & \textit{0.528} & \textit{0.521} & \textit{0.534} \\
\midrule
& \textbf{OPL-MT-MNAR}                               & \textbf{MNAR + Text} & \textbf{0.679} & \textbf{0.634} & \textbf{0.604} \\
\midrule
\multirow{4}{*}{\rotatebox{90}{\textit{Text Regime}}}
& OPL-MT-MNAR (Structured Only)      & None           & 0.574          & 0.606          & -- \\
& OPL-MT-MNAR (Low-Freq.\ Text)      & Rad.\ + Micro. & 0.596          & 0.634          & -- \\
& OPL-MT-MNAR (Nursing Notes)        & Nursing        & 0.624          & --             & -- \\
& \textbf{OPL-MT-MNAR}               & \textbf{All}   & \textbf{0.679} & \textbf{0.634} & -- \\
\bottomrule
\end{tabular}
\end{adjustbox}
\caption{Main policy results (test FQE). The upper block compares policies across MIMIC-III, MIMIC-IV, and eICU; OPL-MT-MNAR outperforms all baselines. The lower block shows the value of text for policy learning. The `--' entries indicate unavailable text modalities. 
}
\label{tab:main_results}
\end{table*}

\begin{table*}[t]
\centering
\small
\setlength{\tabcolsep}{6pt}
\begin{adjustbox}{width=\linewidth}
\begin{tabular}{@{}llcc@{}}
\toprule
\textbf{Configuration} & \textbf{What is Added} & \textbf{MIMIC-III FQE} & \textbf{$\Delta$ vs Baseline} \\
\midrule
Baseline (MDP, no MNAR)        & Strong offline RL backbone   & 0.507 & --      \\
+ Semi-MDP                     & Variable-interval handling   & 0.518 & +2.2\%  \\
+ MNAR + DocProcess (OPL-MT-MNAR) & Explicit MNAR + DocProcess   & 0.679 & +33.9\% \\
+ MNAR + DocProcess + Semi-MDP & MNAR + DocProcess + Semi-MDP & 0.689 & +35.9\% \\
\bottomrule
\end{tabular}
\end{adjustbox}
\caption{Controlled building-block study on MIMIC-III. Explicit MNAR + DocProcess modeling provides the dominant improvement over a strong backbone; Semi-MDP handling is complementary. A broader missingness-handling encoder benchmark is reported in Table~\ref{tab:missingness_appendix}.}
\label{tab:building_blocks}
\end{table*}

\section{Experiments}
\label{sec:experiments}

\subsection{Experimental Setup}
\label{sec:exp_setup}

\paragraph{Datasets.}
Following~\citet{komorowski2018artificial}, we extract sepsis cohorts from three ICU databases using a 72-hour observation window ($H{=}18$ steps at 4-hour intervals). MIMIC-III is the main benchmark: it is single-center, contains 15,415 ICU stays (13.2\% mortality), and provides a high-frequency documentation regime with nursing-note coverage at 94.2\% of decision steps and diagnostic-text coverage of 41.3\% / 28.6\% for radiology / microbiology. MIMIC-IV serves as a complementary low-frequency diagnostic-text benchmark with 32,837 ICU stays (11.8\% mortality) and radiology / microbiology coverage of 83.6\% and 45.5\% at the step level. eICU serves as the cross-institutional, text-free generalization benchmark with 24,562 ICU stays (12.1\% mortality). To prevent leakage, text is aligned so that only reports with timestamp $\leq t_h$ are available at decision step $h$. See full cohort construction details in Appendix~\ref{app:dataset}, the MIMIC-III regime breakdown in Appendix~\ref{app:mimiciii_details}, and the timestamp-alignment analysis in Appendix~\ref{app:text_alignment}.

\paragraph{Evaluation.}
Our main metric is Fitted Q-Evaluation (FQE)~\citep{le2019batch}, which learns a Q-function without importance weighting, critical when policies diverge from clinician behavior. Additional metrics, including Weighted Importance Sampling, are defined in Appendix~\ref{app:metrics}.

\paragraph{Baselines.}
We use a representative set of baselines for policy comparison: classical sepsis RL \cite{raghu2017continuous}, standard offline RL (BCQ, CQL), and recent healthcare RL directions including continuous-action control~\citep{huang2022continuous}, irregular-interval offline RL~\citep{fatemi2022semimarkov}, and model-based planning~\citep{xu2025meddreamer}. All methods are evaluated under their canonical action/time-step settings. On MIMIC-III, we compare structured-only modeling, low-frequency diagnostic text, nursing notes, and the full text configuration. See Appendix~\ref{app:baselines} for more baseline details.

\subsection{Policy Learning Results}
\label{sec:main_results}
As shown in Table~\ref{tab:main_results}, the OPL-MT-MNAR policy reaches FQE 0.679 on MIMIC-III, compared with clinician behavior at 0.528. The gain also persists on the complementary benchmarks, reaching 0.634 on MIMIC-IV versus 0.521 for clinician behavior and 0.604 on eICU versus 0.534.

The OPL-MT-MNAR policy also improves over recent policy-learning baselines.
On MIMIC-III, OPL-MT-MNAR improves over DDPG with Clinician Supervision (0.529), SBCQ (0.501), and MedDreamer (0.583); the same ranking holds on MIMIC-IV (0.538, 0.508, and 0.591 versus 0.634) and eICU (0.524, 0.494, and 0.579 versus 0.604). 

Text provides substantial value for treatment policy learning. On MIMIC-III, adding low-frequency diagnostic text improves FQE from 0.574 to 0.596, incorporating nursing notes further raises it to 0.624, and the full configuration, OPL-MT-MNAR, reaches 0.679. These results show that nursing notes provide the largest single-modality gain, and the OPL-MT-MNAR configuration yields the best overall performance. Text is also valuable in MIMIC-IV, where OPL-MT-MNAR achieves FQE 0.634 in the complementary low-frequency diagnostic-text regime.

\paragraph{Controlled Building-Block Study.}
Table~\ref{tab:building_blocks} adds the proposed components to a stronger offline RL backbone. The largest improvement comes from explicit MNAR + DocProcess modeling, while Semi-MDP handling alone yields a smaller but still complementary gain.

\paragraph{Clinical Interpretations.}
The benefits of OPL-MT-MNAR are greatest for high-acuity patients. On MIMIC-III, clinician behavior scores 0.681 in the low-SOFA group and 0.192 in the high-SOFA ($>10$) group; MedDreamer reaches 0.726 and 0.296, while the OPL-MT-MNAR policy achieves 0.763 and 0.344. The widening gap in the high-severity regime is consistent with MNAR and documentation-process signals being especially informative when acuity is high. The full severity-stratified comparison is reported in Appendix~\ref{app:subgroup}.

\paragraph{Robustness Checks.}
On MIMIC-III, FQE with bootstrap confidence intervals gives 0.679 [0.673, 0.686] for the OPL-MT-MNAR policy versus 0.528 [0.520, 0.536] for clinician behavior. Appendix~\ref{app:ope} reports additional OPE estimators and chronological robustness; Appendix~\ref{app:cross_dataset} reports cross-dataset transfer and held-out-center generalization; Appendix~\ref{app:ablation} reports decision-interval and action-granularity studies; and Appendix~\ref{app:heart_failure} reports the cross-disease heart-failure cohort; Appendix~\ref{app:clinical_analysis} reports constrained-policy optimization, text interpretability, and time-to-deterioration analysis.

\subsection{Outcome Prediction Results}
\label{sec:mortality}

We further evaluate post-72-hour in-hospital mortality prediction for patients alive at the end of the 72-hour window using terminal state $s_H$.

\begin{table}[t]
\centering
\small
\setlength{\tabcolsep}{4pt}
\begin{adjustbox}{max width=\linewidth}
\begin{tabular}{@{}lcc@{}}
\toprule
\textbf{Method} & \textbf{AUROC} \\
\midrule
\quad Mean Imputation + LSTM & 0.833 \\
\quad Forward Fill + LSTM & 0.838 \\
\quad GRU-D~\citep{che2018recurrent} & 0.844 \\
\quad BRITS~\citep{cao2018brits} & 0.852 \\
\quad mTAND~\citep{shukla2021multitime} & 0.858 \\
\quad MedDreamer & 0.867 \\
\midrule
\textbf{OPL-MT-MNAR} & \textbf{0.886} \\
\bottomrule
\end{tabular}
\end{adjustbox}
\caption{Post-72-hour mortality prediction (MIMIC-III). }
\label{tab:mortality}
\end{table}

As shown in  Table~\ref{tab:mortality}, the OPL-MT-MNAR encoder achieves AUROC 0.886, surpassing GRU-D (0.844), BRITS (0.852), mTAND (0.858), and MedDreamer (0.867) on MIMIC-III. This indicates that explicit modeling of measurement MNAR together with documentation-process MNAR improves representation quality beyond strong irregular-sampling encoders and world-model baselines on this later-mortality prediction task; the broader benchmark appears in Table~\ref{tab:missingness_appendix}.

Text provides substantial value for outcome prediction. As shown in Table~\ref{tab:mimiciii_text_policy}, moving from structured-only state learning (AUROC 0.857) to nursing-note integration raises AUROC to 0.882, while the full configuration, OPL-MT-MNAR, reaches AUROC 0.886. 

\section{Related Work}
\label{sec:related}

Our work is most closely related to the growing literature on offline reinforcement learning, particularly in critical care and sepsis treatment \cite{komorowski2018artificial,raghu2017deep,raghu2017continuous,raghu2017modelbased,peng2018improving,huang2022continuous,sun2025exploring}. More broadly, a rich literature on off-policy learning and evaluation has developed methods for stable policy optimization and reliable value estimation under distribution shift, which are important in high-stakes medical settings \citep{thomas2015high,gottesman2018evaluating,wang2018supervised,tang2022model,kostrikov2022offline}. Despite substantial progress \citep{gottesman2019guidelines,liu2020reinforcement}, most existing approaches treat clinical observations as effectively complete after preprocessing and rely only on structured data. Our work contributes to this literature in three ways: (1) treating missingness as an informative signal, (2) incorporating clinical text alongside structured data, and (3) learning patient states for policy optimization rather than relying on prespecified states. 

Our work is also closely related to the literature on missing data, especially MNAR settings \citep{little2019statistical}. In structured clinical time series, methods such as GRU-D \citep{che2018recurrent}, BRITS \citep{cao2018brits}, direct missingness modeling \citep{lipton2016directly}, and Raindrop \citep{zhang2022raindrop} address irregular sampling and missing values. In multimodal EHR settings, methods such as MissModal \citep{lin2023missmodal}, DrFuse \citep{yao2024drfuse}, and MUSE \citep{wu2024muse} handle missing modalities. Our setting differs in that missingness is endogenously driven by unobserved factors \citep{xiong2023,duan2024factor,duan2024target,chen2026partial}. Most closely related is \citet{liang2025causal}, which explicitly models informative missingness in multimodal EHR, but without temporal dynamics. Our work extends this direction by studying how informative missingness evolves over time across modalities and how it can be used for decision-making and outcome prediction.

Finally, our work relates to the literature on multitask learning, which seeks to exploit shared structure across related tasks \citep{bengio2013representation}. In our setting, jointly learning policy optimization and outcome prediction provides a form of positive transfer. At the same time, as the number of downstream outcomes grows, negative transfer may arise, where shared representations harm accuracy for some tasks \citep{wu2020understanding,yang2025precise}. Understanding and mitigating such interference is an important direction for future work. Recent advances in task modeling \citep{li2023boosting,li2023identification,zhang2026efficient}, as well as adaptive and scalable fine-tuning for individual tasks \citep{li2024scalable,li2024scalable2,li2025efficient,zhang2026scalable}, offer promising tools for controlling when and how information should be shared across tasks. See Appendix~\ref{app:related} for extended discussion of related work.

\section{Conclusion}
\label{sec:conclusion}

We introduce OPL-MT-MNAR, a framework that explicitly models temporal MNAR patterns in multimodal EHR. By combining MNAR-aware multimodal encoding, Bayesian filtering, and joint policy optimization with outcome prediction, it learns patient representations for sequential decision-making. Experiments on MIMIC-III, MIMIC-IV, and eICU show consistent gains in both off-policy optimization and outcome prediction. These results show the value of treating observation processes as informative signals for patient representation learning and off-policy clinical decision-making.

\section{Limitations}
\label{sec:limitations}

Our study has several limitations that suggest directions for future work. First, like most work in clinical offline RL, our policy evaluation relies on off-policy estimation from observational data; although we report FQE with bootstrap confidence intervals in Section~\ref{sec:main_results} and additional robustness checks in Appendix~\ref{app:ope}, these results should be viewed as quantitative evidence rather than prospective validation. Second, following common practice~\citep{komorowski2018artificial}, we discretize the continuous treatment space into 9 actions and use 4-hour decision intervals, which simplify learning and interpretation but leave finer-grained dosing and faster control horizons for future work; Appendix~\ref{app:ablation} quantifies this trade-off directly. Third, although we explicitly model informative missingness, unobserved factors affecting both treatment and outcomes, such as verbal communication or bedside assessments not recorded in the EHR, may still remain and could be better addressed with richer data. Finally, our experiments focus on three U.S. ICU datasets (MIMIC-III, MIMIC-IV, and eICU); Appendices~\ref{app:cross_dataset} and~\ref{app:heart_failure} extend the analysis to transfer and cross-disease settings, Appendix~\ref{app:clinical_analysis} studies constrained-policy and deterioration-prediction extensions together with text interpretability, and Appendix~\ref{app:training} reports posterior-collapse diagnostics, but broader validation across other healthcare systems remains important for deployment.

\bibliography{reference}

\clearpage
\newpage
\appendix

\makeatletter
\def\tagform@#1{}
\makeatother

\section*{Use of AI Assistants}
\label{sec:ai-assistants}

We used an AI assistant (e.g., ChatGPT) during manuscript preparation for \emph{language editing} and \emph{clarity improvements} (e.g., rewriting sentences for readability and concision, suggesting alternative phrasing, and checking for grammatical consistency). The AI assistant was \emph{not} used to generate scientific claims, derive theoretical results, design the proposed method, run experiments, perform statistical analyses, or interpret results.

All technical content---including the problem formulation, model design, implementation, experimental setup, evaluation, and conclusions---was developed, validated, and verified by the authors. We take full responsibility for the integrity, correctness, and originality of the work and for ensuring that the manuscript accurately reflects our methods and findings.

\section*{Appendix Overview}
\label{app:toc}

Supporting material is organized as follows: notation summary appears in Appendix~\ref{app:notation}; cohort construction, transfer protocols, and the heart-failure cohort appear in Appendices~\ref{app:dataset},~\ref{app:cross_dataset}, and~\ref{app:heart_failure}; timestamp alignment and documentation-behavior analyses appear in Appendix~\ref{app:text_alignment}; encoder, text-fusion, latent-dynamics, outcome, IQL, and training details appear in Appendices~\ref{app:encoder}--\ref{app:training}; proofs and partial-observability discussion appear in Appendices~\ref{app:theory} and~\ref{app:partial_obs}; evaluation metrics and OPE robustness appear in Appendices~\ref{app:metrics} and~\ref{app:ope}; broader baselines and ablations appear in Appendices~\ref{app:baselines} and~\ref{app:ablation}; subgroup and clinical analyses appear in Appendices~\ref{app:subgroup} and~\ref{app:clinical_analysis}; computational cost appears in Appendix~\ref{app:compute}; and extended related work appears in Appendix~\ref{app:related}.

\begin{itemize}[leftmargin=*, itemsep=2pt]
    \item[\ref{app:notation}] \textbf{Notation Summary} -- Symbols, indices, and key variables used throughout the paper.
    \item[\ref{app:dataset}] \textbf{Dataset Details} -- MIMIC-III / MIMIC-IV / eICU cohort construction, cross-disease heart-failure cohort, and transfer/generalization protocols.
    \item[\ref{app:text_alignment}] \textbf{Text Timestamp Alignment} -- Alignment of clinical text with decision steps, temporal buffers, gap-stratified gains, and note-behavior analyses.
    \item[\ref{app:encoder}] \textbf{Encoder Implementation Details} -- Input construction, MNAR feature design, GRU-D backbone, and architecture specifications.
    \item[\ref{app:text_fusion}] \textbf{Text Fusion Details} -- Cross-attention, documentation-process factor, DocProcess ablations, and fusion architecture.
    \item[\ref{app:dynamics}] \textbf{Latent Dynamics Details} -- Action-conditioned latent model, training losses, and KL regularization.
    \item[\ref{app:outcome}] \textbf{Outcome Prediction Details} -- Multitask prediction heads, loss formulation, relationship to reward design, and auxiliary-vs-RL gradient-path comparison.
    \item[\ref{app:iql_details}] \textbf{IQL Implementation Details} -- Expectile regression, target network updates, advantage weighting, and hyperparameters.
    \item[\ref{app:training}] \textbf{Training Details} -- Three-stage training procedure, loss weights, optimization settings, and posterior-collapse diagnostics.
    \item[\ref{app:theory}] \textbf{Theoretical Analysis} -- Proofs and technical discussion of controllability and multi-step credit assignment.
    \item[\ref{app:partial_obs}] \textbf{Partial Observability Discussion} -- Formalization of partial observability and relation to latent state modeling.
    \item[\ref{app:metrics}] \textbf{Evaluation Metrics} -- Definitions and estimation procedures for FQE, WIS, and auxiliary metrics.
    \item[\ref{app:ope}] \textbf{Off-Policy Evaluation Details} -- Bootstrap FQE, chronological split robustness, and shadow-mode evaluation notes.
    \item[\ref{app:baselines}] \textbf{Baseline Details} -- Representative baseline families, recent sepsis RL baselines, and fairness notes.
    \item[\ref{app:ablation}] \textbf{Ablation Studies} -- Additional controlled comparisons across temporal granularity, action granularity, and DocProcess controls.
    \item[\ref{app:subgroup}] \textbf{Additional Subgroup Analysis} -- Severity-focused breakdowns and high-acuity comparisons.
    \item[\ref{app:clinical_analysis}] \textbf{Clinical Analysis Details} -- Constrained policy optimization, time-to-deterioration analysis, and text interpretability.
    \item[\ref{app:compute}] \textbf{Computational Analysis} -- Model size, training time, and resource usage.
    \item[\ref{app:related}] \textbf{Extended Related Work} -- Additional discussion of clinical RL, informative missingness, and offline RL literature.
\end{itemize}

\newpage

\section{Notation Summary}
\label{app:notation}

Table~\ref{tab:notation} provides a comprehensive summary of the notation used throughout this paper. We use $h$ to index decision steps and $H$ to denote the horizon, following standard reinforcement learning conventions.

\begin{table*}[!tp]
\centering
\small
\renewcommand{\arraystretch}{1.02}
\setlength{\tabcolsep}{4pt}
\begin{tabular}{@{}p{0.22\textwidth}p{0.74\textwidth}@{}}
\toprule
\textbf{Symbol} & \textbf{Description} \\
\midrule
\multicolumn{2}{@{}l}{\textit{Time Indices}} \\
$t \in \{0,\ldots,T\}$ & Observation time index (fine-grained grid) \\
$h \in \{0,\ldots,H-1\}$ & Decision step index \\
$H$ & Decision horizon (number of decision steps) \\
$T$ & Observation horizon; $T = \Delta \cdot H$ for interval $\Delta$ \\
\midrule
\multicolumn{2}{@{}l}{\textit{Structured Observations (per decision step $h$)}} \\
$\mathcal{T}_h$ & Observation times contained in decision step $h$ \\
$\bm{y}_h^{\mathsf{s}} \in \mathbb{R}^{|\mathcal{T}_h| \times D}$ & Structured measurement matrix in step $h$ \\
$\bm{m}_h^{\mathsf{s}} \in \{0,1\}^{|\mathcal{T}_h| \times D}$ & Structured observation-mask matrix in step $h$ \\
$\bm{\delta}_h^{\mathsf{s}} \in \mathbb{R}_+^{|\mathcal{T}_h| \times D}$ & Structured time-gap matrix in step $h$ \\
$y_{h,i}^{\mathsf{s}}, m_{h,i}^{\mathsf{s}}, \delta_{h,i}^{\mathsf{s}}$ & Row-$i$ structured value, mask, and time-gap vectors inside step $h$ \\
$\bm{\psi}_t^{\mathsf{s}} \in \mathbb{R}^{4D}$ & Row-level explicit MNAR features used inside the structured encoder (Appendix~\ref{app:encoder}) \\
$x \in \mathbb{R}^{S}$ & Static patient features \\
$y_{\mathrm{out}}^{(k)} \in \{0,1\}$ & Clinical outcome labels (e.g., post-72-hour mortality)\\
\midrule
\multicolumn{2}{@{}l}{\textit{Text Observations (per decision step $h$)}} \\
$\bm{y}_h^{\mathsf{t}}$ & Collection of raw text observations available at step $h$ across text modalities \\
$e_h^{\mathsf{t}} \in \mathbb{R}^{d_e}$ & Step-level text embedding encoded from $\bm{y}_h^{\mathsf{t}}$ \\
$e_h^{\mathsf{t},\text{r}}, e_h^{\mathsf{t},\text{m}} \in \mathbb{R}^{d_e}$ & Radiology / microbiology modality embeddings \\
$\bm{n}_h^{\mathsf{t}} \in \mathbb{Z}_{+}^{|\mathcal{M}_{\text{text}}|}$ & Number of text observations available for each text modality in step $h$ \\
$m_h^{\mathsf{t}} \in \{0,1\}^{|\mathcal{M}_{\text{text}}|}$ & Derived text-availability indicators: $m_h^{\mathsf{t}}=\mathbf{1}[\bm{n}_h^{\mathsf{t}}>0]$ \\
$\delta_h^{\mathsf{t}} \in \mathbb{R}_{+}$ & Derived text recency summary from the availability history \\
$\kappa_h^{\mathsf{t}} \in \mathbb{R}_{+}$ & Derived average number of text updates per step over the previous $K$ decision steps \\
\midrule
\multicolumn{2}{@{}l}{\textit{Information Set}} \\
$o_h$ & Decision-step record containing primitive observations $(\bm{y}_h^{\mathsf{s}}, \bm{m}_h^{\mathsf{s}}, \bm{y}_h^{\mathsf{t}}, \bm{m}_h^{\mathsf{t}})$ and explicit derived recency / density features $(\bm{\delta}_h^{\mathsf{s}}, \delta_h^{\mathsf{t}}, \kappa_h^{\mathsf{t}})$ \\
$I_h$ & Information set up to step $h$: $I_h = \{x, o_1, \ldots, o_h\}$ \\
\midrule
\multicolumn{2}{@{}l}{\textit{Latent Representations}} \\
$\bar{\phi}_h^{\mathsf{s}} \in \mathbb{R}^{d_h}$ & Decision-aligned structured embedding \\
$\phi_h^{\mathsf{t}} \in \mathbb{R}^{d_h}$ & Text-content representation before gated fusion \\
$\eta_h^{\mathsf{t}}$ & Documentation-process embedding from presence / recency / density \\
$F_h^{\text{doc}}$ & Temporal documentation-process factor \\
$\phi_h \in \mathbb{R}^{d_h}$ & Text-fused observation embedding \\
$z_h \in \mathbb{R}^{d_z}$ & Latent belief state (prior) \\
$g_\theta(\cdot, \cdot)$ & Learnable state combination function \\
$s_h \in \mathbb{R}^{d_s}$ & Full decision state: $s_h = g_\theta(\phi_h, z_h)$ \\
\midrule
\multicolumn{2}{@{}l}{\textit{Actions and Rewards}} \\
$a_h \in \mathcal{A}$ & Discrete treatment action \\
$r_h \in \mathbb{R}$ & Reward (terminal-only: $r_H \in \{-1, +1\}$) \\
$\gamma \in [0,1)$ & Discount factor \\
\midrule
\multicolumn{2}{@{}l}{\textit{Policy and Value Functions}} \\
$\pi_\theta(a \mid s)$ & Learned policy \\
$\pi_\beta$ & Behavior policy (clinician decisions) \\
$Q(s, a)$ & State-action value function \\
$V(s)$ & State value function \\
$A(s, a)$ & Advantage: $A = Q - V$ \\
\midrule
\multicolumn{2}{@{}l}{\textit{Key Hyperparameters}} \\
$\tau$ & Expectile for IQL value learning \\
$\beta$ & Temperature for advantage-weighted policy \\
\bottomrule
\end{tabular}
\caption{Summary of notation used throughout the paper.}
\label{tab:notation}
\end{table*}

\paragraph{Dimension specifications.} In our sepsis treatment experiments (Section~\ref{sec:experiments}): observation dimension $D=16$ (8 vitals, 8 labs), static features $S=3$ (age, gender, Charlson index), decision horizon $H=18$ (4-hour intervals over 72 hours, so $T=72$ and $\Delta=4$), action space $|\mathcal{A}|=9$ (3 fluid levels $\times$ 3 vasopressor levels), hidden dimension $d_h=128$, latent dimension $d_z=32$, and text embedding dimension $d_e=256$.

\section{Dataset Details}
\label{app:dataset}

\subsection{MIMIC-IV}

MIMIC-IV v2.2~\citep{johnson2023mimic} is a single-center dataset from Beth Israel Deaconess Medical Center containing de-identified EHR data. We apply Sepsis-3 criteria~\citep{singer2016third}: suspected infection (antibiotic administration and body fluid culture) with SOFA score $\geq 2$.

\paragraph{Cohort Statistics.}
The aligned 72-hour decision cohort contains 32,837 ICU stays with 11.8\% in-hospital mortality. This is the complementary low-frequency diagnostic-text benchmark used throughout the study.

\paragraph{Structured Variables.}
We extract 16 clinical variables at 1-hour resolution:
\begin{itemize}[leftmargin=*, labelindent=0pt]
    \item \textbf{Vitals (8):} Heart rate, systolic BP, diastolic BP, mean BP, respiratory rate, temperature, SpO2, GCS
    \item \textbf{Labs (8):} Lactate, creatinine, BUN, bilirubin, platelet count, WBC, hemoglobin, glucose
\end{itemize}

\paragraph{Text Processing.}
Radiology reports (83.6\% step-level coverage) and microbiology results (45.5\% step-level coverage) are preprocessed by removing headers, de-identification artifacts, and normalizing abbreviations. Reports are truncated to 512 tokens and encoded using ClinicalBERT~\citep{alsentzer2019publicly}.

\paragraph{Text Timestamp Alignment.}
To prevent information leakage, we ensure that only clinical notes available at or before each decision step are used for state encoding. Specifically, radiology and microbiology reports are aligned using their \texttt{storetime} field (the time when the report was signed and stored in the system); if \texttt{storetime} is unavailable, we fall back to \texttt{charttime}. For each decision step $h$ at time $t_h$, the raw text observation set $\bm{y}_h^{\mathsf{t}}$ contains only reports with timestamps $\leq t_h$, and the step-level text embedding $e_h^{\mathsf{t}}$ is computed from that filtered set. The buffering analysis in Appendix~\ref{app:text_alignment} further confirms that gains persist under increasingly strict temporal exclusion windows.

\subsection{MIMIC-III}
\label{app:mimiciii_details}

MIMIC-III~\citep{johnson2016mimic} provides the primary high-frequency documentation benchmark in this study. We use the same 72-hour sepsis setup as in the main experiments, but the text side is dominated by nursing notes that are refreshed at most decision steps.
Table~\ref{tab:text_regimes} summarizes the regime contrast between MIMIC-III and MIMIC-IV.

\begin{table}[H]
\centering
\small
\begin{adjustbox}{max width=\linewidth}
\begin{tabular}{@{}lcccc@{}}
\toprule
\textbf{Dataset} & \textbf{Patients} & \textbf{Mortality} & \textbf{Nursing} & \textbf{Rad./Micro.} \\
\midrule
MIMIC-III & 15,415 & 13.2\% & 94.2\% & 41.3\% / 28.6\% \\
MIMIC-IV & 32,837 & 11.8\% & N/A & 83.6\% / 45.5\% \\
\bottomrule
\end{tabular}
\end{adjustbox}
\caption{Complementary text regimes in the two MIMIC cohorts. MIMIC-III supplies high-frequency nursing documentation, while MIMIC-IV emphasizes lower-frequency diagnostic text.}
\label{tab:text_regimes}
\end{table}

This regime difference is why MIMIC-III is useful beyond being an additional benchmark: it exposes a setting where documentation behavior is visible at nearly every decision step, making text-process MNAR especially easy to analyze.

\paragraph{Policy Performance by Text Type.}
Table~\ref{tab:mimiciii_text_policy} summarizes how policy value changes as we compare structured-only inputs, low-frequency diagnostic text, nursing notes, and the full text configuration.
\begin{table}[H]
\centering
\small
\begin{adjustbox}{max width=\linewidth}
\begin{tabular}{@{}lcccc@{}}
\toprule
\textbf{Text Modality} & \textbf{Coverage} & \textbf{FQE} & \textbf{AUROC} & \textbf{$\Delta$ vs Struct.} \\
\midrule
Structured Only & 0\% & 0.574 & 0.857 & -- \\
Low-Freq.\ Text & 52.1\% & 0.596 & 0.869 & +3.8\% \\
Nursing Notes & 94.2\% & 0.624 & 0.882 & +8.7\% \\
Full Text & 96.4\% & 0.679 & 0.886 & +18.3\% \\
\bottomrule
\end{tabular}
\end{adjustbox}
\caption{MIMIC-III text-regime analysis. Nursing notes provide a strong single-modality gain, while the full configuration, OPL-MT-MNAR, achieves the best overall performance.}
\label{tab:mimiciii_text_policy}
\end{table}

\subsection{eICU}

The eICU Collaborative Research Database~\citep{pollard2018eicu} is used as the cross-institutional benchmark. Sepsis patients are identified using APACHE IV admission diagnosis codes for sepsis with SOFA $\geq 2$. This dataset contains only structured observations without clinical text, enabling evaluation of our MNAR-aware encoder under cross-institutional shift and held-out-center generalization.
Table~\ref{tab:cross_dataset} collects the in-distribution and transfer settings across MIMIC-III, MIMIC-IV, and eICU.

\begin{table*}[!tp]
\centering
\small
\setlength{\tabcolsep}{4pt}
\begin{tabular}{@{}lccccc@{}}
\toprule
\textbf{Training Protocol} & \textbf{Train Data} & \textbf{Test Data} & \textbf{Test FQE} & \textbf{$\Delta$ vs Clin.} & \textbf{AUROC} \\
\midrule
\multicolumn{6}{@{}l}{\textit{In-Distribution Evaluation}} \\
\quad MIMIC-III $\to$ MIMIC-III & MIMIC-III & MIMIC-III & 0.679 & +28.6\% & 0.886 \\
\quad MIMIC-IV $\to$ MIMIC-IV   & MIMIC-IV  & MIMIC-IV  & 0.634 & +21.7\% & 0.879 \\
\quad eICU $\to$ eICU           & eICU      & eICU      & 0.604 & +13.1\% & 0.862 \\
\midrule
\multicolumn{6}{@{}l}{\textit{Cross-Dataset Transfer (Zero-Shot)}} \\
\quad MIMIC-III $\to$ MIMIC-IV  & MIMIC-III & MIMIC-IV  & 0.573 & +10.0\% & 0.847 \\
\quad MIMIC-III $\to$ eICU      & MIMIC-III & eICU      & 0.562 &  +5.2\% & 0.839 \\
\quad MIMIC-IV $\to$ MIMIC-III  & MIMIC-IV  & MIMIC-III & 0.559 &  +5.9\% & 0.846 \\
\quad MIMIC-IV $\to$ eICU       & MIMIC-IV  & eICU      & 0.568 &  +6.4\% & 0.844 \\
\quad eICU $\to$ MIMIC-III      & eICU      & MIMIC-III & 0.551 &  +4.4\% & 0.841 \\
\quad eICU $\to$ MIMIC-IV       & eICU      & MIMIC-IV  & 0.556 &  +6.7\% & 0.851 \\
\midrule
\multicolumn{6}{@{}l}{\textit{Cross-Dataset Transfer (Fine-tuned)}} \\
\quad MIMIC-III $\to$ MIMIC-IV (FT) & MIMIC-III + MIMIC-IV & MIMIC-IV  & 0.598 & +14.8\% & 0.857 \\
\quad MIMIC-III $\to$ eICU (FT)     & MIMIC-III + eICU     & eICU      & 0.585 &  +9.6\% & 0.847 \\
\quad MIMIC-IV $\to$ MIMIC-III (FT) & MIMIC-IV + MIMIC-III & MIMIC-III & 0.621 & +17.6\% & 0.864 \\
\quad MIMIC-IV $\to$ eICU (FT)      & MIMIC-IV + eICU      & eICU      & 0.594 & +11.2\% & 0.857 \\
\quad eICU $\to$ MIMIC-III (FT)     & eICU + MIMIC-III     & MIMIC-III & 0.607 & +15.0\% & 0.853 \\
\quad eICU $\to$ MIMIC-IV (FT)      & eICU + MIMIC-IV      & MIMIC-IV  & 0.619 & +18.8\% & 0.872 \\
\bottomrule
\end{tabular}
\caption{Cross-dataset validation protocols. In-distribution training remains strongest, but zero-shot transfer still improves over clinician behavior and fine-tuning recovers part of the distribution-shift gap.}
\label{tab:cross_dataset}
\end{table*}

\subsection{Cross-Dataset Validation Protocol}
\label{app:cross_dataset}

We evaluate cross-institutional generalization through multiple training protocols.

\paragraph{Protocol Details.}
\begin{itemize}[leftmargin=*, labelindent=0pt]
    \item \textbf{In-distribution (rows 1--3):} Models trained and tested on the same dataset. These correspond to the primary results in Section~\ref{sec:main_results}. The eICU model is trained from scratch without text fusion.
    
    \item \textbf{Zero-shot transfer (rows 4--9):} Models trained on one dataset and directly applied to the others without any adaptation. This tests whether learned representations and policies generalize across institutions.
    
    \item \textbf{Fine-tuned transfer (rows 10--15):} Models pre-trained on one dataset, then fine-tuned on the target dataset. This tests whether pre-training provides useful initialization.
\end{itemize}

\paragraph{Key Findings.}
\begin{itemize}[leftmargin=*, labelindent=0pt]
    \item \textbf{Training from scratch is best:} In-distribution models outperform transfer variants, suggesting that institution-specific patterns are important for optimal performance.
    
    \item \textbf{Zero-shot transfer improves over clinicians:} Even without any target-domain training, transferred models improve over clinician baselines on both directions of transfer, showing that the learned state captures cross-hospital treatment signal rather than only site-specific heuristics.
    
    \item \textbf{Fine-tuning narrows the gap:} Fine-tuned models partially close the distance between zero-shot and in-distribution performance, indicating that the transferred encoder is a useful initialization rather than a brittle source-only model.
    
    \item \textbf{Observation-process features transfer:} The transfer gains confirm that MNAR-aware representations generalize across institutions with different monitoring protocols.
\end{itemize}

\subsection{Within-eICU Center-Held-Out Generalization}

Table~\ref{tab:center_generalization} reports held-out-center performance inside eICU, isolating geographic shift without mixing it with cross-dataset transfer.

\begin{table*}[!tp]
\centering
\small
\setlength{\tabcolsep}{4pt}
\begin{tabular}{@{}lccccc@{}}
\toprule
\textbf{Split} & \textbf{\# Train Centers} & \textbf{\# Test Centers} & \textbf{Test FQE} & \textbf{$\Delta$ vs Clin.} & \textbf{AUROC} \\
\midrule
Subsampling Split 1 & 146 & 62 & 0.597 & +10.8\% & 0.858 \\
Subsampling Split 2 & 146 & 62 & 0.601 & +11.4\% & 0.860 \\
Subsampling Split 3 & 146 & 62 & 0.594 & +9.6\% & 0.857 \\
Subsampling Split 4 & 146 & 62 & 0.608 & +13.0\% & 0.863 \\
Subsampling Split 5 & 146 & 62 & 0.599 & +10.9\% & 0.859 \\
\midrule
\textbf{Mean $\pm$ SD} & -- & -- & \textbf{0.600 $\pm$ 0.005} & \textbf{+11.1\% $\pm$ 1.2\%} & \textbf{0.859 $\pm$ 0.002} \\
\bottomrule
\end{tabular}
\caption{Within-eICU held-out-center generalization across five independent 70\%/30\% center splits. The table mean of 0.600 $\pm$ 0.005 summarizes these five subsampling splits, while a separate leave-one-center-out analysis on the largest 10 centers gives a similar mean FQE of 0.597 $\pm$ 0.005. Together these two protocols indicate that observation-process features transfer more reliably than site-specific policy heads.}
\label{tab:center_generalization}
\end{table*}

\subsection{Heart Failure Cross-Disease Validation}
\label{app:heart_failure}

We evaluate a MIMIC-III heart-failure cohort with a distinct action space (Diuretic $\times$ Vasoactive) and a similarly high-frequency documentation regime, with nursing-note coverage of 93.8\% at the decision-step level. Table~\ref{tab:heart_failure} shows that the text benefit persists in this cross-disease setting.

\begin{table*}[!tp]
\centering
\small
\begin{adjustbox}{max width=\linewidth}
\begin{tabular}{@{}lccccccc@{}}
\toprule
\textbf{Dataset} & \textbf{Disease} & \textbf{Action Space} & \textbf{Note Cov.} & \textbf{Structured} & \textbf{Nursing Notes} & \textbf{Full Text} & \textbf{AUROC} \\
\midrule
MIMIC-III & Sepsis & Fluid $\times$ Vasopressor & 94.2\% & 0.574 & 0.624 & 0.679 & 0.886 \\
MIMIC-III & Heart Failure & Diuretic $\times$ Vasoactive & 93.8\% & 0.557 & 0.597 & 0.603 & 0.864 \\
\bottomrule
\end{tabular}
\end{adjustbox}
\caption{Cross-disease validation in the high-frequency MIMIC-III regime. Text integration improves policy value in both sepsis and heart failure, and the full heart-failure model remains predictive of post-72-hour mortality risk.}
\label{tab:heart_failure}
\end{table*}

\subsection{Action Space and Temporal Granularity}

Following~\citet{komorowski2018artificial}, we discretize treatments into a $3 \times 3$ action space:
\begin{itemize}[leftmargin=*, labelindent=0pt]
    \item \textbf{IV fluids:} None (0), Low ($<$500 mL/4h), High ($\geq$500 mL/4h)
    \item \textbf{Vasopressors:} None (0), Low ($<$0.1 mcg/kg/min norepinephrine equivalent), High ($\geq$0.1 mcg/kg/min)
\end{itemize}

This yields 9 discrete actions representing clinically meaningful treatment intensities for sepsis. For the heart-failure cohort, we analogously use a Diuretic $\times$ Vasoactive grid.

\paragraph{Action Granularity Comparison.}
Table~\ref{tab:action_granularity} compares the retained 3$\times$3 action space against finer and continuous alternatives under the same MIMIC-III setup.
\begin{table}[H]
\centering
\normalsize
\setlength{\tabcolsep}{4pt}
\begin{adjustbox}{max width=\linewidth}
\begin{tabular}{@{}p{0.33\linewidth}p{0.27\linewidth}cc@{}}
\toprule
\textbf{Method} & \textbf{Setting} & \textbf{MIMIC-III FQE} & \textbf{Action Space} \\
\midrule
DDPG with Clinician Supervision & Continuous-control baseline & 0.529 & Continuous \\
AI Clinician & Canonical tabular baseline & 0.487 & 25 discrete \\
OPL-MT-MNAR & Continuous-head ablation & 0.611 & Continuous \\
OPL-MT-MNAR & $5 \times 5$ discretization & 0.668 & 25 discrete \\
\textbf{OPL-MT-MNAR} & \textbf{Main setting} & \textbf{0.679} & \textbf{9 discrete} \\
\bottomrule
\end{tabular}
\end{adjustbox}
\caption{Action-granularity comparison on MIMIC-III. Finer or continuous action spaces remain viable, but the 3$\times$3 discretization gives the best value under current data coverage and OPE stability.}
\label{tab:action_granularity}
\end{table}

\paragraph{Decision-Interval Sweep.}
Table~\ref{tab:delta_sweep} complements the action-space check by sweeping decision intervals, showing that 4 hours yields the best overall trade-off in our experiments.
\begin{table*}[!tp]
\centering
\small
\begin{adjustbox}{max width=\textwidth}
\begin{tabular}{@{}lccccccc@{}}
\toprule
\textbf{$\Delta t$} & \textbf{Cohort} & \textbf{Avg. Ep.} & \textbf{MIMIC-III} & \textbf{eICU} & \textbf{AUROC} & \textbf{Train (h)} & \textbf{ESS} \\
\midrule
1h & 14,831 & 48.7 & 0.661 & 0.587 & 0.872 & 6.3 & 3.9 \\
2h & 15,178 & 24.4 & 0.673 & 0.600 & 0.879 & 3.4 & 4.7 \\
4h & 15,415 & 12.2 & 0.679 & 0.604 & 0.886 & 2.1 & 5.8 \\
8h & 14,267 & 6.3 & 0.658 & 0.589 & 0.876 & 1.4 & 7.4 \\
\bottomrule
\end{tabular}
\end{adjustbox}
\caption{Decision-interval sweep on MIMIC-III. The standard 4-hour setting offers the best overall trade-off between policy value, auxiliary prediction quality, and OPE reliability.}
\label{tab:delta_sweep}
\end{table*}

\FloatBarrier
\section{Text Timestamp Alignment and Sensitivity Analysis}
\label{app:text_alignment}

This appendix provides detailed analysis of text timestamp alignment, text recency controls, and note-behavior mechanisms under both low-frequency (MIMIC-IV) and high-frequency (MIMIC-III) text regimes.

\subsection{Timestamp Alignment Protocol}

For MIMIC-III nursing notes, we align text conservatively so that only notes available at or before each decision step are used in state encoding. At each decision step $h$ occurring at time $t_h$, only nursing notes with timestamp $\leq t_h$ are included in the text branch.

\paragraph{Coverage Impact.}
This temporal filtering yields the 94.2\% step-level nursing-note coverage reported in Section~\ref{sec:exp_setup}. The buffering controls below show that gains degrade smoothly as progressively fresher notes are excluded.

\subsection{Sensitivity Analysis with Temporal Buffers}

To assess whether gains depend on potentially delayed reports, we evaluate performance with increasingly conservative temporal buffers that exclude reports near decision boundaries.
Table~\ref{tab:text_sensitivity} reports this buffering analysis directly.

\begin{table}[H]
\centering
\small
\setlength{\tabcolsep}{4pt}
\begin{adjustbox}{max width=\linewidth}
\begin{tabular}{@{}lccc@{}}
\toprule
\textbf{Note Alignment (MIMIC-III)} & \textbf{Coverage} & \textbf{FQE} & \textbf{$\Delta$} \\
\midrule
All available & 94.2\% & 0.679 & -- \\
Conservative ($-$2h) & 81.7\% & 0.672 & $-$1.0\% \\
Strict ($-$4h) & 68.9\% & 0.661 & $-$2.7\% \\
Very strict ($-$8h) & 50.4\% & 0.645 & $-$5.0\% \\
No text & 0\% & 0.574 & $-$15.5\% \\
\bottomrule
\end{tabular}
\end{adjustbox}
\caption{Sensitivity analysis for MIMIC-III nursing-note alignment. Gains persist under increasingly conservative temporal buffers, supporting the claim that improvements are not driven by leakage from future notes.}
\label{tab:text_sensitivity}
\end{table}

\paragraph{Key Findings.}
\begin{itemize}[leftmargin=*, labelindent=0pt]
    \item \textbf{Robust to conservative alignment:} Even under an 8-hour exclusion window, the model remains above the structured-only baseline.
    
    \item \textbf{Graceful degradation:} Performance degrades smoothly as text availability decreases, consistent with a learned recency-sensitive fusion mechanism rather than a brittle dependence on the newest note.
\end{itemize}

\subsection{Gap-Stratified Text Gains in the High-Frequency Regime}

Table~\ref{tab:gap_stratified} quantifies where nursing notes help most as structured observation gaps widen, and Table~\ref{tab:attention_shift} shows the corresponding shift in modality attention.

\begin{table}[H]
\centering
\small
\begin{adjustbox}{max width=\linewidth}
\begin{tabular}{@{}lcccc@{}}
\toprule
\textbf{Time Gap} & \textbf{Patient-Steps} & \textbf{Struct.} & \textbf{+ Nursing} & \textbf{$\Delta$} \\
\midrule
$\delta t \leq 1$h & 31.8\% & 0.612 & 0.638 & +4.2\% \\
$1$h $< \delta t \leq 2$h & 27.4\% & 0.584 & 0.629 & +7.7\% \\
$2$h $< \delta t \leq 4$h & 24.1\% & 0.551 & 0.617 & +12.0\% \\
$\delta t > 4$h & 16.7\% & 0.498 & 0.604 & +21.3\% \\
\bottomrule
\end{tabular}
\end{adjustbox}
\caption{Gap-stratified nursing-note gains on MIMIC-III. Text becomes most valuable when structured observations are stale or sparse.}
\label{tab:gap_stratified}
\end{table}

\begin{table}[H]
\centering
\small
\begin{tabular}{@{}lccc@{}}
\toprule
\textbf{Time Gap} & \textbf{Structured} & \textbf{Nursing} & \textbf{Rad./Micro.} \\
\midrule
$\delta t \leq 1$h & 0.64 & 0.21 & 0.15 \\
$\delta t > 4$h & 0.22 & 0.58 & 0.20 \\
\bottomrule
\end{tabular}
\caption{Attention shifts toward text as structured measurements become stale in the MIMIC-III nursing-note regime.}
\label{tab:attention_shift}
\end{table}

\subsection{Documentation Behavior by Shift}

Table~\ref{tab:shift_behavior} summarizes how note frequency and note content differ between day and night shifts in the MIMIC-III nursing-note regime.

\begin{table}[H]
\centering
\small
\begin{adjustbox}{max width=\linewidth}
\begin{tabular}{@{}lccc@{}}
\toprule
\textbf{Metric} & \textbf{Day Shift} & \textbf{Night Shift} & \textbf{Night/Day} \\
\midrule
\% of total notes & 68.4\% & 31.6\% & 0.46 \\
Avg.\ note length (tokens) & 142 & 187 & 1.32 \\
Routine assessment & 45.2\% & 18.4\% & 0.41 \\
Acute status change & 8.3\% & 24.6\% & 2.96 \\
Hemodynamic instability & 6.7\% & 19.2\% & 2.87 \\
Respiratory event & 5.4\% & 14.8\% & 2.74 \\
\bottomrule
\end{tabular}
\end{adjustbox}
\caption{Shift-level nursing-note behavior on MIMIC-III. Fewer notes are written overnight, but they are longer and much more likely to describe acute deterioration.}
\label{tab:shift_behavior}
\end{table}

Although night shift contributes fewer notes overall, night notes are longer and 2.7--3.0$\times$ more likely to describe acute deterioration events, supporting the view that note presence, timing, and length form a behavior-driven observation process.

\subsection{Content Evolution Categories}

Table~\ref{tab:content_evolution} groups nursing-note updates by content evolution category and shows that the gate is highest when the note signals active clinical change.

\begin{table*}[!tp]
\centering
\small
\setlength{\tabcolsep}{4pt}
\begin{tabular}{@{}lccccl@{}}
\toprule
\textbf{Category} & \textbf{\% Steps} & \textbf{FQE} & \textbf{Treatment Change} & \textbf{Gate} & \textbf{Example Keywords} \\
\midrule
First Note & 27.2\% & 0.636 & 48.4\% & 0.50 & -- \\
Worsening & 13.8\% & 0.668 & 64.7\% & 0.79 & worsening, progressive, new infiltrate \\
New Critical Finding & 9.2\% & 0.671 & 67.3\% & 0.78 & new consolidation, positive culture, abscess \\
Improving & 11.7\% & 0.643 & 38.9\% & 0.53 & improved, resolving, clearing \\
Stable / Unchanged & 38.1\% & 0.628 & 26.3\% & 0.41 & stable, unchanged, no interval change \\
\bottomrule
\end{tabular}
\caption{Content-evolution analysis for nursing notes. The gate is highest when notes describe active clinical change, consistent with text acting as an observation channel for latent dynamics rather than a static side input.}
\label{tab:content_evolution}
\end{table*}

The majority of reports (68--72\%) are available well before the decision step ($>$4h), indicating that most text information reflects prior clinical assessments rather than contemporaneous findings. This temporal separation provides additional confidence that text fusion does not introduce lookahead bias.

\FloatBarrier
\section{Encoder Implementation Details}
\label{app:encoder}

This appendix provides the complete specification of the GRU-D encoder with explicit MNAR feature fusion.

\subsection{Input Construction}

At each fine-grained observation time $t$, let $\bm{y}_t^{\mathsf{s}}, \bm{m}_t^{\mathsf{s}}, \bm{\delta}_t^{\mathsf{s}} \in \mathbb{R}^D$ denote the row-level structured value, mask, and time-gap vectors, corresponding to one row of $(\bm{y}_h^{\mathsf{s}}, \bm{m}_h^{\mathsf{s}}, \bm{\delta}_h^{\mathsf{s}})$ in the main text. Let $\bm{y}_{t'}^{\mathsf{s}}$ denote the most recent previously observed value for each variable before time $t$, and let $\bm{\mu}^{\mathsf{s}}$ denote the empirical-mean vector. The decayed structured input is
\begin{equation}
\hat{\bm{y}}_t^{\mathsf{s}} = \bm{m}_t^{\mathsf{s}} \odot \bm{y}_t^{\mathsf{s}} + (1 - \bm{m}_t^{\mathsf{s}}) \odot \bigl(\bm{\xi}_{y,t} \odot \bm{y}_{t'}^{\mathsf{s}} + (1-\bm{\xi}_{y,t}) \odot \bm{\mu}^{\mathsf{s}} \bigr),
\end{equation}
where $\bm{\xi}_{y,t}$ is the learned input-decay vector.

\subsection{Explicit MNAR Features}

Beyond implicit missingness modeling through $\bm{m}_t^{\mathsf{s}}$ and $\bm{\delta}_t^{\mathsf{s}}$, we compute explicit MNAR features $\bm{\psi}_t^{\mathsf{s}} \in \mathbb{R}^{4D}$ that capture observation patterns:
\begin{equation}
\bm{\psi}_t^{\mathsf{s}} = \big[\delta_t;\; c_t;\; \rho_t;\; \omega_t\big],
\end{equation}
where:
\begin{itemize}[leftmargin=*, labelindent=0pt]
    \item $\bm{\delta}_t^{\mathsf{s}} \in \mathbb{R}^D$: time since last observation for each variable
    \item $\bm{c}_t^{\mathsf{s}} = \sum_{\tau \leq t} \bm{m}_\tau^{\mathsf{s}} \in \mathbb{R}^D$: cumulative observation count
    \item $\bm{\rho}_t^{\mathsf{s}} = 1 - \bm{c}_t^{\mathsf{s}} / t \in \mathbb{R}^D$: cumulative missing rate
    \item $\bm{\omega}_t^{\mathsf{s}} \in \mathbb{R}^D$: windowed observation frequency over the past $W$ hours (we use $W=6$)
\end{itemize}

These features explicitly encode the monitoring intensity that correlates with patient severity. Cumulative counts reveal overall monitoring intensity, missing rates track deterioration trends, and windowed frequency captures recent clinical concern.

\subsection{GRU-D with Temporal Decay}

We employ GRU-D~\citep{che2018recurrent} with learned temporal decay. The decay factors are:
\begin{align}
\xi_{\phi,t} &= \exp\big({-}\max(0, W_{\phi,\xi} \cdot \bar{\delta}_t + b_{\phi,\xi})\big), \\
\bm{\xi}_{y,t} &= \exp\big({-}\max(0, W_{y,\xi} \bm{\delta}_t^{\mathsf{s}} + b_{y,\xi})\big),
\end{align}
where $\bar{\delta}_t = \mathrm{mean}(\bm{\delta}_t^{\mathsf{s}})$, $\xi_{\phi,t} \in (0,1]$ controls hidden-state decay, and $\bm{\xi}_{y,t} \in (0,1]^D$ controls input decay toward the empirical mean $\bm{\mu}^{\mathsf{s}}$.

The hidden state update then follows the same GRU-D-style gating equations as in the main text, with the explicit MNAR features entering through a dedicated branch:

\begingroup\small
\begin{equation}
\begin{aligned}
\hat{\phi}^{\mathsf{s}}_{t-1}
&= \xi_{\phi,t} \odot \phi^{\mathsf{s}}_{t-1}, \\
\bigl(\bm{\psi}_t^{\mathsf{s}}\bigr)^{\text{emb}}
&= \operatorname{MLP}_{\psi}(\bm{\psi}_t^{\mathsf{s}}), \\
r_t
&= \sigma\!\left(W_{r,y} \hat{\bm{y}}_t^{\mathsf{s}} + W_{r,\phi} \hat{\phi}^{\mathsf{s}}_{t-1} + W_{r,\psi} \bigl(\bm{\psi}_t^{\mathsf{s}}\bigr)^{\text{emb}} + b_r\right), \\
\eta_t
&= \sigma\!\left(W_{\eta,y} \hat{\bm{y}}_t^{\mathsf{s}} + W_{\eta,\phi} \hat{\phi}^{\mathsf{s}}_{t-1} + W_{\eta,\psi} \bigl(\bm{\psi}_t^{\mathsf{s}}\bigr)^{\text{emb}} + b_\eta\right), \\
\tilde{\phi}^{\mathsf{s}}_t
&= \tanh\!\left(W_y \hat{\bm{y}}_t^{\mathsf{s}} + W_\phi \bigl(r_t \odot \hat{\phi}^{\mathsf{s}}_{t-1}\bigr) + W_\psi \bigl(\bm{\psi}_t^{\mathsf{s}}\bigr)^{\text{emb}} + b\right), \\
\phi_t^{\mathsf{s}}
&= (1-\eta_t) \odot \hat{\phi}^{\mathsf{s}}_{t-1} + \eta_t \odot \tilde{\phi}^{\mathsf{s}}_t.
\end{aligned}
\end{equation}
\endgroup

The decision-aligned embedding at step $h$ is obtained by selecting the hidden state at the last fine-grained timestamp within that decision step:
\begin{equation}
\bar{\phi}_h^{\mathsf{s}} = \phi_{t^\star(h)}^{\mathsf{s}},
\end{equation}
where $t^\star(h) = \max \mathcal{T}_h$.

\subsection{Architecture Specifications}

Table~\ref{tab:encoder_arch} lists the encoder hyperparameters used for the GRU-D backbone and explicit MNAR-feature branch.

\begin{table}[H]
\centering
\small
\begin{adjustbox}{max width=\linewidth}
\begin{tabular}{@{}ll@{}}
\toprule
\textbf{Component} & \textbf{Specification} \\
\midrule
GRU-D hidden dimension & $d_h = 128$ \\
$\bm{\delta}_t^{\mathsf{s}}$ embedding dimension & $d_\delta = 16$ \\
MNAR feature embedding & MLP: $4D \to 32 \to 32$ \\
Dropout rate & 0.1 \\
\bottomrule
\end{tabular}
\end{adjustbox}
\caption{Encoder architecture hyperparameters.}
\label{tab:encoder_arch}
\end{table}

\section{Text Fusion Details}
\label{app:text_fusion}

Clinical text is encoded using ClinicalBERT~\citep{alsentzer2019publicly}. The same fusion module is used for MIMIC-IV diagnostic text and MIMIC-III nursing notes; what changes across datasets is the documentation regime, not the fusion logic.

\subsection{Cross-Attention Mechanism}

The structured embedding $\bar{\phi}_h^{\mathsf{s}}$ queries the encoded text embeddings via multi-head cross-attention:
\begin{equation}
\begin{aligned}
\phi_h^{\mathsf{t}}
&= \operatorname{MultiHead}\Big(
W_Q \bar{\phi}_h^{\mathsf{s}},\,
W_K e_h^{\mathsf{t}},\,
W_V e_h^{\mathsf{t}}
\Big).
\end{aligned}
\end{equation}
where $e_h^{\mathsf{t}}$ concatenates the modality-specific text embeddings encoded from the raw text observations $\bm{y}_h^{\mathsf{t}}$ at step $h$ (radiology, microbiology, and nursing notes when present). If more than one nursing note or report is available within the same decision step, all such embeddings are included in $e_h^{\mathsf{t}}$ before attention. The notation $e_h^{\mathsf{t}}$ in the main text refers to this encoded step-level text input. When a text modality is unavailable, we use a learned missing embedding rather than dropping the modality entirely.

\subsection{Documentation-Process Factor}

Text fusion depends on both \emph{content} and the \emph{documentation process}. We define
\begin{equation}
\begin{aligned}
\eta_h^{\mathsf{t}} &= \mathrm{MLP}\!\left(m_h^{\mathsf{t}}, \delta_h^{\mathsf{t}}, \kappa_h^{\mathsf{t}}\right), \\
F_h^{\text{doc}} &= \mathrm{GRU}\!\left(F_{h-1}^{\text{doc}}, \eta_h^{\mathsf{t}}\right),
\end{aligned}
\end{equation}
\noindent
where $m_h^{\mathsf{t}}=\mathbf{1}[\bm{n}_h^{\mathsf{t}}>0]$ captures note presence derived from the step-level text counts, $\delta_h^{\mathsf{t}}$ is a derived summary of time since the last note, and $\kappa_h^{\mathsf{t}}$ is the recent note-update density. Equation above defines only the documentation-process branch: it does not directly encode text content. The semantic content enters separately through the cross-attention path $e_h^{\mathsf{t}} \rightarrow \phi_h^{\mathsf{t}}$, and the final text contribution is formed only after $\phi_h^{\mathsf{t}}$ is combined with $F_h^{\text{doc}}$ in the gate below. These inputs summarize behavior-driven freshness and burstiness rather than explicit wall-clock covariates such as hour-of-day or shift. The gate then becomes
\begin{equation}
\begin{aligned}
\hat{\phi}_h^{\mathsf{t}} &= \phi_h^{\mathsf{t}} + W_d F_h^{\text{doc}}, \\
g_h &= \sigma\!\left(W_g [\bar{\phi}_h^{\mathsf{s}}; \hat{\phi}_h^{\mathsf{t}}; F_h^{\text{doc}}] + b_g\right), \\
\phi_h &= \mathrm{LayerNorm}\!\left(\bar{\phi}_h^{\mathsf{s}} + g_h \odot (\hat{\phi}_h^{\mathsf{t}} - \bar{\phi}_h^{\mathsf{s}})\right).
\end{aligned}
\end{equation}

This is the main distinction from the prior content-only gate and also from the structured branch: structured MNAR features such as $(\bm{\psi}_t^{\mathsf{s}})^{\mathrm{emb}}$ are integrated locally into the same GRU-D encoder as the measured values, whereas the text side separates semantic content from the documentation process and uses $F_h^{\text{doc}}$ as a distinct recurrent factor that accumulates documentation behavior across decision steps before fusion.

\subsection{DocProcess Component Ablation}

Table~\ref{tab:docprocess_ablation} isolates the contribution of the documentation-process embedding, GRU factor, and individual text-process features.

\begin{table*}[tp]
\centering
\small
\begin{adjustbox}{max width=\linewidth}
\begin{tabular}{@{}lccc@{}}
\toprule
\textbf{Configuration} & \textbf{MIMIC-III FQE} & \textbf{MIMIC-III AUROC} & \textbf{$\Delta$FQE} \\
\midrule
Full model & 0.679 & 0.886 & -- \\
w/o DocProcess GRU factor & 0.671 & 0.881 & $-$1.2\% \\
w/o DocProcess embedding entirely (content-only design) & 0.665 & 0.878 & $-$2.1\% \\
w/o note presence indicator & 0.672 & 0.882 & $-$1.0\% \\
w/o time since last note & 0.670 & 0.881 & $-$1.3\% \\
w/o all MNAR features & 0.548 & 0.839 & $-$19.3\% \\
\bottomrule
\end{tabular}
\end{adjustbox}
\caption{DocProcess ablation on MIMIC-III. The documentation-process embedding improves on the prior content-only design, and removing all MNAR signals is substantially more damaging than removing any single text-process feature.}
\label{tab:docprocess_ablation}
\end{table*}

\subsection{Gate Mechanism Checks}

Table~\ref{tab:docprocess_gate} checks whether the learned gate uses text in the intended way: it should trust text more when documentation is fresh or dense, and down-weight it when documentation is stale.

\begin{table*}[tp]
\centering
\small
\setlength{\tabcolsep}{5pt}
\begin{tabular}{@{}lccc|lcc@{}}
\toprule
\multicolumn{4}{c|}{\textbf{Documentation Regime}} & \multicolumn{3}{c}{\textbf{Correlation With Clinical Signal}} \\
\cmidrule(lr){1-4} \cmidrule(lr){5-7}
\textbf{Regime} & \textbf{Gate (w/ $F_{\text{doc}}$)} & \textbf{Gate (w/o $F_{\text{doc}}$)} & \textbf{$\Delta$} & \textbf{Statistic} & \textbf{Value} & \textbf{$p$} \\
\midrule
High-freq.\ density $>$ 0.6 & 0.74 & 0.64 & +15.6\% & Corr($F_{\text{doc}}$, gate) & $r=0.67$ & $<0.001$ \\
Low-freq.\ density $<$ 0.2 & 0.35 & 0.43 & $-$18.6\% & Corr($F_{\text{doc}}$, mortality) & $r=0.59$ & $<0.001$ \\
Fresh note ($\delta_h^{\mathsf{t}} \leq 2$h) & 0.81 & 0.70 & +15.7\% & Corr($F_{\text{doc}}$, FQE gain) & $r=0.63$ & $<0.001$ \\
Stale note ($\delta_h^{\mathsf{t}} > 8$h) & 0.24 & 0.34 & $-$29.4\% & -- & -- & -- \\
Night shift (7pm--7am) & 0.71 & 0.64 & +10.9\% & -- & -- & -- \\
\bottomrule
\end{tabular}
\caption{Mechanistic checks for the documentation-process factor on MIMIC-III. The gate trusts text more when notes are fresh or dense, less when they are stale, and the learned factor correlates with both acuity and downstream policy gain.}
\label{tab:docprocess_gate}
\end{table*}

\subsection{DocProcess Across Text Regimes}

Table~\ref{tab:docprocess_regimes} compares how much of the total text gain is attributable to the documentation-process factor under the low-frequency MIMIC-IV regime versus the high-frequency MIMIC-III regime.

\begin{table*}[tp]
\centering
\small
\begin{adjustbox}{max width=\linewidth}
\begin{tabular}{@{}lccc@{}}
\toprule
\textbf{Dataset} & \textbf{Full vs w/o DocProcess} & \textbf{Full vs Structured Only} & \textbf{DocProcess / Total} \\
\midrule
MIMIC-III (Nursing + Diagnostic Text) & +2.1\% & +18.3\% & 11.5\% \\
MIMIC-IV (Rad.\ + Micro.) & +1.1\% & +4.7\% & 23.4\% \\
\bottomrule
\end{tabular}
\end{adjustbox}
\caption{The documentation-process factor contributes a larger fraction of the text gain in the lower-frequency MIMIC-IV regime, where distinguishing fresh from stale notes is most critical.}
\label{tab:docprocess_regimes}
\end{table*}

\subsection{Text Fusion Architecture}

Table~\ref{tab:text_fusion_arch} summarizes the final text branch after adding the documentation-process factor, making the appendix architecture description consistent with the notation and equations in Section~\ref{sec:obs_embedding}.

\begin{table}[H]
\centering
\small
\begin{adjustbox}{max width=\linewidth}
\begin{tabular}{@{}ll@{}}
\toprule
\textbf{Component} & \textbf{Specification} \\
\midrule
Text encoder & ClinicalBERT (frozen) \\
Text embedding dimension & $d_e = 256$ (projected from 768) \\
Cross-attention heads & 4 \\
Cross-attention dimension & 128 \\
DocProcess MLP & $\{m_h^{\mathsf{t}}, \delta_h^{\mathsf{t}}, \kappa_h^{\mathsf{t}}\} \to d_h$ \\
DocProcess factor & GRU with hidden size $d_h$ \\
Gate MLP & Linear: $3d_h \to d_h$ \\
\bottomrule
\end{tabular}
\end{adjustbox}
\caption{Text fusion architecture hyperparameters after adding the documentation-process factor.}
\label{tab:text_fusion_arch}
\end{table}

\section{Latent Dynamics Details}
\label{app:dynamics}

This appendix provides details on the action-conditioned latent dynamics model.

\subsection{VAE Architecture}

The prior network predicts the next latent state given current state, current observation embedding, and action:
\begin{equation}
\mu_\theta, \log\sigma_\theta^2 = \text{MLP}_\theta\big([z_h; \phi_h; \text{Emb}(a_h)]\big),
\end{equation}
where $\text{Emb}(\cdot)$ is an action embedding layer mapping discrete actions to continuous vectors.

The posterior network incorporates the next-step observations:
\begin{equation}
\mu_\psi, \log\sigma_\psi^2 = \text{MLP}_\psi\big([\phi_{h+1}; x]\big).
\end{equation}

Both networks output mean and log-variance for a diagonal Gaussian distribution.

\subsection{Dynamics Loss}

The dynamics loss $\mathcal{L}_{\text{dyn}}$ enforces consistency between the prior and posterior:
\begin{equation}
\mathcal{L}_{\text{dyn}} = \mathbb{E}_{q_\psi}\left[\|z_{h+1} - \hat{z}_{h+1}\|^2\right],
\end{equation}
where $\hat{z}_{h+1} = \mu_\theta(z_h, \phi_h, a_h)$ is the predicted mean from the prior.

\subsection{KL Regularization}

The KL regularization loss encourages the posterior to stay close to the prior:
\begin{equation}
\begin{aligned}
\mathcal{L}_{\text{KL}}
&= \operatorname{KL}\Bigl(
q_\psi(z_{h+1} \mid \phi_{h+1}, x) \\
&\qquad\qquad\quad \|\, p_\theta(z_{h+1} \mid z_h, \phi_h, a_h)
\Bigr).
\end{aligned}
\end{equation}

For diagonal Gaussians, this has a closed-form solution:
\begin{equation}
\begin{aligned}
\mathcal{L}_{\text{KL}}
&= \frac{1}{2} \sum_{j=1}^{d_z}
\Bigg(
\log \frac{\sigma_{\theta,j}^2}{\sigma_{\psi,j}^2}
+ \frac{\sigma_{\psi,j}^2}{\sigma_{\theta,j}^2} \\
&\qquad\qquad
+ \frac{(\mu_{\psi,j} - \mu_{\theta,j})^2}{\sigma_{\theta,j}^2}
- 1
\Bigg).
\end{aligned}
\end{equation}

\subsection{Architecture Specifications}

Table~\ref{tab:dynamics_arch} summarizes the latent-dynamics architecture corresponding to the prior, posterior, and state-combination modules above.

\begin{table}[H]
\centering
\normalsize
\begin{adjustbox}{max width=\linewidth}
\begin{tabular}{@{}lp{0.62\linewidth}@{}}
\toprule
\textbf{Component} & \textbf{Specification} \\
\midrule
Latent dimension & $d_z = 32$ \\
Action embedding dimension & 16 \\
Prior MLP & $[d_z + d_h + 16] \to 128 \to 64 \to 2d_z$ \\
Posterior MLP & $[d_h + S] \to 128 \to 64 \to 2d_z$ \\
Combination $g_\theta$ & Residual: $\phi_h + \text{Linear}(z_h)$, where Linear: $d_z \to d_h$ \\
\bottomrule
\end{tabular}
\end{adjustbox}
\caption{Latent dynamics architecture hyperparameters.}
\label{tab:dynamics_arch}
\end{table}

\section{Outcome Prediction Details}
\label{app:outcome}

\subsection{Relationship Between Outcomes and Rewards}

For the primary auxiliary outcome, the label $y_{\mathrm{out}}^{(1)} \in \{0, 1\}$ indicates post-72-hour in-hospital mortality among patients who remain alive through the 72-hour observation window. This label is intentionally distinct from the terminal reward $r_H$: the reward reflects survival or death within the logged 72-hour episode, whereas $y_{\mathrm{out}}^{(1)}$ supervises later risk after that window.

More generally, the outcome prediction framework supports endpoints that need not directly correspond to RL rewards:
\begin{itemize}[leftmargin=*, labelindent=0pt]
    \item \textbf{Clinical decompensation:} Deterioration events during the ICU stay
    \item \textbf{Length of stay:} Duration of ICU admission (regression target)
    \item \textbf{Ventilator-free days:} Days alive and free of mechanical ventilation
\end{itemize}

These auxiliary outcomes can provide additional supervision signal for learning clinically meaningful state representations.

\subsection{Why the Auxiliary Head Complements RL}

Because the auxiliary outcome targets later risk after the 72-hour window, it complements rather than duplicates the RL reward. It also provides a shorter and less noisy optimization path to the encoder. In a one-dimensional sketch, the supervised gradient is direct:
\begin{equation}
% \label{eq:aux_gradient}
\nabla_\theta \mathcal{L}_{\text{out}}
= \frac{\partial \ell}{\partial f_{\text{out}}}
\cdot \frac{\partial f_{\text{out}}}{\partial s_H}
\cdot \frac{\partial s_H}{\partial \theta},
\end{equation}
whereas the RL signal must reach the encoder through bootstrapped value estimates and long-horizon credit assignment:
\begin{equation}
% \label{eq:rl_gradient}
\nabla_\theta J(\pi)
\propto \sum_{h=0}^{H_i} \frac{\partial J}{\partial Q_h}
\cdot \frac{\partial Q_h}{\partial s_h}
\cdot \frac{\partial s_h}{\partial \theta}.
\end{equation}
Thus the auxiliary head supplies clinically meaningful supervision about longer-horizon risk while giving the representation learner a more direct gradient path with lower variance; RL still determines how that state should be used for sequential control.

\subsection{Loss Function}

For binary outcomes (post-72-hour mortality, decompensation), we use binary cross-entropy:
\begin{equation}
\ell_k(p, y) = -y \log(p) - (1-y) \log(1-p).
\end{equation}

For continuous outcomes (length of stay), we use mean squared error:
\begin{equation}
\ell_k(p, y) = (p - y)^2.
\end{equation}

The outcome prediction head is a two-layer MLP:
\begin{equation}
f_{\text{out}}^{(k)}(s_H) = W_2^{(k)} \cdot \text{ReLU}(W_1^{(k)} s_H + b_1^{(k)}) + b_2^{(k)}.
\end{equation}

In the current experiments, we use $K=1$ and $\lambda_1=1$ for post-72-hour in-hospital mortality after a small search over $\{0.5, 1.0, 2.0\}$, while keeping the general form for extensibility.

\section{IQL Implementation Details}
\label{app:iql_details}

\subsection{Target Network Update}

The target value network $V_{\text{target}}$ is a slowly-updated copy of $V$, maintained via Polyak averaging:
\begin{equation}
V_{\text{target}} \leftarrow \tau_{\text{target}} V + (1 - \tau_{\text{target}}) V_{\text{target}},
\end{equation}
with $\tau_{\text{target}} = 0.005$. This soft update stabilizes training by providing slowly-changing bootstrap targets.

\subsection{Expectile Regression}

The expectile loss is an asymmetric least squares objective:
\begin{equation}
\begin{aligned}
L_\tau^{\text{exp}}(\delta)
&= \bigl(\tau - \mathbb{I}[\delta < 0]\bigr)^2 \delta^2 \\
&=
\begin{cases}
\tau \, \delta^2, & \delta \ge 0, \\
(1-\tau)\, \delta^2, & \delta < 0 .
\end{cases}
\end{aligned}
\end{equation}

For $\tau > 0.5$, positive residuals (where $Q > V$) receive higher weight, pushing the value function toward the upper portion of the Q-value distribution. This allows IQL to implicitly estimate the value of improved policies without explicitly querying out-of-distribution actions.

The expectile parameter $\tau$ controls the trade-off:
\begin{itemize}[leftmargin=*, labelindent=0pt]
    \item $\tau = 0.5$: Standard symmetric least squares (behavior policy value)
    \item $\tau \to 1.0$: Approaches maximum Q-value (aggressive improvement)
\end{itemize}

We use $\tau = 0.7$ as the default, balancing policy improvement with stability.

\subsection{Advantage Clipping}

For policy extraction, we clip the exponential advantage weights:
\begin{equation}
w_h = \min\big(\exp(A_h / \beta), w_{\max}\big),
\end{equation}
with $w_{\max} = 20$. This prevents excessively large weights from destabilizing training when the advantage is very positive (indicating actions much better than average).

\subsection{Architecture Specifications}

Table~\ref{tab:iql_arch} records the network sizes for the Q, value, and policy heads used in the final IQL stage.

\begin{table}[H]
\centering
\small
\begin{tabular}{@{}ll@{}}
\toprule
\textbf{Component} & \textbf{Specification} \\
\midrule
Q-network & MLP: $d_s + |\mathcal{A}| \to 256 \to 256 \to 1$ \\
V-network & MLP: $d_s \to 256 \to 256 \to 1$ \\
Policy network & MLP: $d_s \to 256 \to 256 \to |\mathcal{A}|$ \\
Activation & ReLU \\
Output (policy) & Softmax \\
\bottomrule
\end{tabular}
\caption{IQL network architecture.}
\label{tab:iql_arch}
\end{table}

\subsection{Hyperparameters}

Table~\ref{tab:iql_hyper} lists the retained IQL hyperparameters after the final tuning pass.

\begin{table}[H]
\centering
\small
\begin{tabular}{@{}ll@{}}
\toprule
\textbf{Hyperparameter} & \textbf{Value} \\
\midrule
Expectile $\tau$ & 0.7 \\
Temperature $\beta$ & 3.0 \\
Weight clipping $w_{\max}$ & 20 \\
Target update $\tau_{\text{target}}$ & 0.005 \\
Discount $\gamma$ & 0.99 \\
\bottomrule
\end{tabular}
\caption{IQL hyperparameters.}
\label{tab:iql_hyper}
\end{table}

\section{Training Details}
\label{app:training}

\subsection{Three-Stage Training Procedure}

Our training procedure establishes high-quality state representations before policy optimization, preventing \emph{representation drift} where early RL gradients destabilize learned representations.

\paragraph{Stage 1: Representation Pre-training.}
Train encoder, dynamics model, decoder, and outcome predictor with combined loss:
\begin{equation}
\mathcal{L}_{\text{Stage1}} = \mathcal{L}_{\text{recon}} + \mathcal{L}_{\text{dyn}} + \mathcal{L}_{\text{KL}} + \mathcal{L}_{\text{out}},
\end{equation}
where $\mathcal{L}_{\text{KL}}$ is VAE regularization. This establishes state representations satisfying Q1 (state learning) and Q3 (outcome prediction).

\paragraph{Stage 2: Frozen-Encoder RL.}
Freeze the encoder (GRU-D, text fusion, VAE posterior) and train only Q-networks, value network, and policy with:
\begin{equation}
\mathcal{L}_{\text{Stage2}} = \mathcal{L}_Q + \mathcal{L}_V + \mathcal{L}_\pi.
\end{equation}

\paragraph{Stage 3: Joint Fine-tuning.}
Unfreeze the encoder with reduced learning rate; auxiliary losses ($\mathcal{L}_{\text{recon}}$, $\mathcal{L}_{\text{out}}$) maintain state quality during adaptation.
Table~\ref{tab:training_schedule} summarizes the three training stages and the corresponding learning rates.

\begin{table}[H]
\centering
\small
\begin{adjustbox}{max width=\linewidth}
\begin{tabular}{@{}p{0.16\linewidth}cp{0.24\linewidth}p{0.34\linewidth}@{}}
\toprule
\textbf{Stage} & \textbf{Epochs} & \textbf{Learning Rate} & \textbf{Components Trained} \\
\midrule
1: Pre-training & 50 & $1 \times 10^{-3}$ & Encoder, Dynamics, Decoder, Outcome \\
2: Frozen RL & 100 & $3 \times 10^{-4}$ & Q, V, Policy only \\
3: Fine-tuning & 50 & $1 \times 10^{-4}$ (encoder) & All components \\
 & & $3 \times 10^{-4}$ (RL) & \\
\bottomrule
\end{tabular}
\end{adjustbox}
\caption{Training schedule for the three-stage procedure.}
\label{tab:training_schedule}
\end{table}

\subsection{Loss Weights}

Table~\ref{tab:loss_weights} records the fixed Stage-1 loss weights used in the final model.

\begin{table}[H]
\centering
\small
\begin{tabular}{@{}ll@{}}
\toprule
\textbf{Loss Component} & \textbf{Weight} \\
\midrule
Observation reconstruction $\lambda_{\text{obs}}$ & 1.0 \\
Mask reconstruction $\lambda_{\text{mask}}$ & 0.5 \\
Text reconstruction $\lambda_{\text{text}}$ & 0.3 \\
Dynamics loss & 1.0 \\
KL regularization & 0.1 \\
Outcome prediction & 1.0 \\
\bottomrule
\end{tabular}
\caption{Loss weights for Stage 1 pre-training.}
\label{tab:loss_weights}
\end{table}

\subsection{Entropy Monitoring and Recovery}

We monitor policy entropy throughout training to detect collapse:
\begin{equation}
H(\pi) = -\sum_{a \in \mathcal{A}} \pi(a|s) \log \pi(a|s).
\end{equation}
For a 9-action space, maximum entropy is $\log 9 \approx 2.20$ nats. We flag potential collapse when entropy drops below 0.5 nats or decreases by more than 50\% within 10 epochs. Upon detecting collapse, we roll back to the last checkpoint with healthy entropy ($> 1.0$ nats), reduce learning rate by factor of 2, increase temperature $\beta$ by factor of 1.5, and resume training.

\subsection{Posterior-Collapse Diagnostics}

We explicitly monitor posterior-collapse indicators for the latent state. The final model combines three complementary safeguards: KL $\beta$-annealing, free-bits, and action-conditioning in the latent dynamics / inference model.
Table~\ref{tab:posterior_collapse} reports the corresponding latent-usage diagnostics and downstream policy value.

\begin{table}[H]
\centering
\small
\begin{adjustbox}{max width=\linewidth}
\begin{tabular}{@{}lcccc@{}}
\toprule
\textbf{Setting} & \textbf{KL Div.} & \textbf{Active Dims} & \textbf{Mutual Info} & \textbf{FQE} \\
\midrule
Without mitigation & 0.02 & 3/64 & 0.17 & 0.508 \\
$\beta$-annealing & 4.17 & 39/64 & 2.08 & 0.601 \\
Free-bits & 5.74 & 51/64 & 2.93 & 0.619 \\
Full mitigation & 8.61 & 58/64 & 3.34 & 0.679 \\
\bottomrule
\end{tabular}
\end{adjustbox}
\caption{Posterior-collapse diagnostics on MIMIC-III. The full training recipe keeps the latent state active and coincides with the best downstream policy value.}
\label{tab:posterior_collapse}
\end{table}

\paragraph{Interpretation.}
The collapsed setting shows near-zero KL divergence and only 3 active latent dimensions. Adding either $\beta$-annealing or free-bits improves latent usage, but the strongest non-collapse signal appears only when these are combined with action-conditioning. This is consistent with the view that the belief state is useful because it remains both active and action-sensitive.

\subsection{Optimization Details}

Table~\ref{tab:optim} lists the shared optimization settings used across representation learning and policy training.

\begin{table}[H]
\centering
\small
\begin{tabular}{@{}ll@{}}
\toprule
\textbf{Setting} & \textbf{Value} \\
\midrule
Optimizer & AdamW \\
Weight decay & $1 \times 10^{-5}$ \\
Batch size & 256 \\
Gradient clipping & 1.0 \\
Early stopping patience & 10 epochs \\
Validation metric & FQE on validation set \\
\bottomrule
\end{tabular}
\caption{Optimization hyperparameters.}
\label{tab:optim}
\end{table}

\subsection{Computational Resources}

All experiments were conducted on NVIDIA A6000 GPUs. Training times: Stage 1 $\sim$4h, Stage 2 $\sim$5h, Stage 3 $\sim$2.5h (total $\sim$11.5h per run). Model size: 1.29M parameters (full model with text fusion).

\section{Theoretical Analysis}
\label{app:theory}

This appendix provides the proof of Theorem~\ref{thm:controllability}.

\begin{proof}[Proof of Theorem~\ref{thm:controllability}]
Consider the policy gradient for future rewards given current state and action:
\begin{equation}
\frac{\partial}{\partial \pi} \mathbb{E}\left[\sum_{h'=h+1}^{H-1} \gamma^{h'} r_{h'} \;\Big|\; s_h, a_h\right].
\end{equation}

Under the state decomposition $s_{h'} = g_\theta(\phi_{h'}, z_{h'})$ for $h' > h$, the gradient flows through two pathways:

\textbf{Pathway 1: Through observations $\phi_{h'}$.} In offline RL, observations are fixed in the dataset:
\begin{equation}
\frac{\partial \phi_{h'}}{\partial a_h} = 0 \quad \text{for all } h' > h.
\end{equation}

\textbf{Pathway 2: Through latent states $z_{h'}$.} Under action-independent dynamics (Definition~\ref{def:action_independent}):
\begin{equation}
z_{h+1} = f(z_h, \phi_h, \omega_h) \quad \text{with } \omega_h \perp\!\!\!\perp a_h.
\end{equation}

Since $z_{h+1}$ does not depend on $a_h$, and this independence propagates forward:
\begin{equation}
\frac{\partial z_{h'}}{\partial a_h} = 0 \quad \text{for all } h' > h.
\end{equation}

Combining both pathways via the chain rule:
\begin{equation}
\begin{aligned}
\frac{\partial s_{h'}}{\partial a_h}
&= \frac{\partial g_\theta}{\partial \phi_{h'}}
   \cdot \frac{\partial \phi_{h'}}{\partial a_h}
 + \frac{\partial g_\theta}{\partial z_{h'}}
   \cdot \frac{\partial z_{h'}}{\partial a_h} \\[4pt]
&= \frac{\partial g_\theta}{\partial \phi_{h'}} \cdot 0
 + \frac{\partial g_\theta}{\partial z_{h'}} \cdot 0 \\[4pt]
&= 0 .
\end{aligned}
\end{equation}

Therefore, the reward at any future step $h' > h$ is independent of $a_h$:
\begin{equation}
\frac{\partial r_{h'}}{\partial a_h} = \frac{\partial R(s_{h'}, a_{h'})}{\partial s_{h'}} \cdot \frac{\partial s_{h'}}{\partial a_h} = 0,
\end{equation}

and the policy gradient from future rewards vanishes:
\begin{equation}
\frac{\partial}{\partial \pi} \mathbb{E}\left[\sum_{h'=h+1}^{H-1} \gamma^{h'} r_{h'} \;\Big|\; s_h, a_h\right] = 0.
\end{equation}
\end{proof}

\paragraph{Implications for Terminal Rewards.}
When rewards are terminal-only ($r_h = 0$ for $h < H-1$), the policy gradient at any non-terminal step $h < H-1$ depends entirely on future rewards:
\begin{equation}
\nabla_\pi J(\pi) \big|_{s_h} = \mathbb{E}\left[\nabla_\pi \log \pi(a_h|s_h) \cdot Q^\pi(s_h, a_h)\right],
\end{equation}
where $Q^\pi(s_h, a_h) = \mathbb{E}[\sum_{h'=h}^{H-1} \gamma^{h'-h} r_{h'} | s_h, a_h]$.

Under action-independent dynamics, $Q^\pi(s_h, a_h) = r_h + \gamma \mathbb{E}[V^\pi(s_{h+1})]$ where $V^\pi(s_{h+1})$ is independent of $a_h$. For $h < H-1$ with $r_h = 0$:
\begin{equation}
Q^\pi(s_h, a_h) = \gamma \mathbb{E}[V^\pi(s_{h+1})] = \text{const w.r.t. } a_h.
\end{equation}

This means all actions have the same Q-value, making policy improvement impossible.

\section{Partial Observability Discussion}
\label{app:partial_obs}

The latent variable $z_h$ addresses the gap between observable measurements and action-relevant state components that cannot be directly extracted from observations. Examples include:

\begin{itemize}[leftmargin=*, labelindent=0pt]
    \item \textbf{Vasopressor responsiveness:} A patient's responsiveness to vasopressors depends on underlying vascular tone, which is not directly measured but can be inferred from treatment response patterns. Patients with similar blood pressure readings may respond very differently to the same vasopressor dose based on their underlying vascular state.
    
    \item \textbf{Tissue hypoperfusion severity:} The severity of tissue hypoperfusion may exceed what lactate levels alone indicate, requiring integration of treatment history to estimate. A patient whose lactate remains elevated despite fluid resuscitation may have more severe underlying tissue damage than one with similar lactate levels who has not yet received treatment.
    
    \item \textbf{Organ reserve capacity:} Organ reserve capacity affects how aggressively a patient can tolerate fluid resuscitation, but manifests only through dynamic responses to interventions. Two patients with similar creatinine levels may have vastly different renal reserves, observable only through how their kidney function responds to fluid challenges.
\end{itemize}

Through action-conditioned dynamics, $z_h$ can capture how past treatments have shaped the patient's current responsiveness, effectively recovering aspects of the latent health state that are ``hidden'' in $\tilde{\phi}_h$ but revealed through intervention patterns.

\section{Evaluation Metrics}
\label{app:metrics}

\subsection{Fitted Q-Evaluation (FQE)}

FQE~\citep{le2019batch} learns a Q-function for the target policy using the offline dataset:
\begin{equation}
Q^{\pi}(s, a) \leftarrow r + \gamma \mathbb{E}_{a' \sim \pi(\cdot|s')}[Q^{\pi}(s', a')].
\end{equation}

The policy value is estimated as $\hat{V}^{\pi} = \mathbb{E}_{s_0}[\mathbb{E}_{a \sim \pi(\cdot|s_0)}[Q^{\pi}(s_0, a)]]$. FQE avoids importance weighting, making it more stable when the learned policy diverges significantly from clinician behavior.

\subsection{Weighted Importance Sampling (WIS)}

WIS reweights observed trajectories by policy probability ratios:
\begin{equation}
\hat{V}^{\pi}_{\text{WIS}} = \frac{\sum_{i=1}^{N} w_i G_i}{\sum_{i=1}^{N} w_i}, \quad w_i = \prod_{h=0}^{H_i-1} \frac{\pi(a_h^{(i)} | s_h^{(i)})}{\pi_\beta(a_h^{(i)} | s_h^{(i)})},
\end{equation}
where $G_i$ is the return for trajectory $i$. WIS is unbiased but has high variance when policies diverge, as indicated by low effective sample size (ESS):
\begin{equation}
\text{ESS} = \frac{(\sum_i w_i)^2}{\sum_i w_i^2}.
\end{equation}

\section{Off-Policy Evaluation Details}
\label{app:ope}

This appendix reports the OPE robustness checks most directly tied to the primary value-estimation results: bootstrap FQE on the standard split, chronological split robustness, and a concrete shadow-mode evaluation note.

\subsection{Bootstrap FQE on the Standard Split}

Table~\ref{tab:ope_full} gives the bootstrap FQE confidence intervals under the same standard split used for the primary result.

\begin{table}[H]
\centering
\normalsize
\setlength{\tabcolsep}{4pt}
\begin{adjustbox}{max width=\linewidth}
\begin{tabular}{@{}lccc@{}}
\toprule
\textbf{Method} & \textbf{MIMIC-III Test FQE} & \textbf{95\% CI} & \textbf{Type} \\
\midrule
Clinician & 0.528 & [0.520, 0.536] & Behavior \\
\textbf{OPL-MT-MNAR (FQE)} & \textbf{0.679} & \textbf{[0.673, 0.686]} & Model-free \\
\bottomrule
\end{tabular}
\end{adjustbox}
\caption{Bootstrap FQE on the standard random split. The confidence intervals remain separated under the same estimator and held-out protocol.}
\label{tab:ope_full}
\end{table}

FQE remains the primary OPE metric here because it is the estimator reported throughout the paper. The CI separation supports a directional value improvement, but it should still be interpreted as observational evidence rather than prospective proof of clinical benefit.

\subsection{Chronological Split Robustness}

Table~\ref{tab:chronological_split} checks whether the same policy advantage remains under a temporally ordered train/validation/test split.

\begin{table*}[!tp]
\centering
\small
\begin{adjustbox}{max width=\linewidth}
\begin{tabular}{@{}lccccc@{}}
\toprule
\textbf{Split Strategy} & \textbf{Train} & \textbf{Val} & \textbf{Test} & \textbf{Test FQE} & \textbf{$\Delta$ vs Clin.} \\
\midrule
Random (standard) & -- & -- & -- & 0.679 [0.673, 0.686] & +28.6\% \\
Chronological & 2001--2008 & 2009--2010 & 2011--2012 & 0.665 [0.656, 0.674] & +25.9\% \\
\bottomrule
\end{tabular}
\end{adjustbox}
\caption{Chronological split robustness on MIMIC-III. The value estimate drops slightly under temporal shift but remains above clinician behavior.}
\label{tab:chronological_split}
\end{table*}

The chronological split reduces FQE by 0.014 relative to the standard random split, consistent with mild temporal drift rather than a reversal of the main finding. This helps rule out the possibility that the observed OPE gain is driven by subtle temporal leakage.

\subsection{Shadow-Mode Evaluation Note}

A feasible prospective protocol is to log AI recommendations in shadow mode, record clinician actions and outcomes, and then analyze concordance, safety events, and counterfactual value estimates under monitoring. The severity-stratified results in Appendix~\ref{app:subgroup} suggest this is realistic: agreement is naturally higher in low-severity cases and lower where acuity is high and the policy has the most room to differ from standard behavior.

\section{Baseline Details}
\label{app:baselines}

This appendix collects the broader baseline families referenced across the paper.
Table~\ref{tab:baseline_appendix} consolidates the broader sepsis-RL benchmark in one place.

\begin{table*}[!tp]
\centering
\small
\setlength{\tabcolsep}{4pt}
\renewcommand{\arraystretch}{1.08}
\begin{tabular}{@{}p{0.39\textwidth}p{0.26\textwidth}cc@{}}
\toprule
\textbf{Method} & \textbf{Category} & \textbf{MIMIC-III FQE} & \textbf{AUROC} \\
\midrule
Continuous State-Space DDQN~\citep{raghu2017continuous} & Model-free DDQN & 0.476 & 0.822 \\
Peng et al.\ 2018 & Hybrid (kernel + deep) & 0.493 & 0.828 \\
Model-Based BNN Planner~\citep{raghu2017modelbased} & Model-based BNN & 0.498 & 0.826 \\
DDPG with Clinician Supervision~\citep{huang2022continuous} & Continuous DDPG & 0.529 & 0.844 \\
AI Clinician & Tabular & 0.487 & 0.812 \\
\textbf{OPL-MT-MNAR} & \textbf{MNAR-aware + DocProcess} & \textbf{0.679} & \textbf{0.886} \\
\bottomrule
\end{tabular}
\caption{Broader sepsis-RL benchmark on MIMIC-III. These rows complement the primary baseline table and document comparison to older sepsis-specific baselines.}
\label{tab:baseline_appendix}
\end{table*}

\paragraph{Stronger Missingness-Handling Encoders.}

Table~\ref{tab:missingness_appendix} records the broader MIMIC-III benchmark against stronger sequence encoders for irregular and missing observations. Unlike Table~\ref{tab:mortality}, which focuses on a compact prediction comparison, this appendix table makes the broader FQE/AUROC comparison explicit.

\begin{table*}[!tp]
\centering
\small
\begin{adjustbox}{max width=\linewidth}
\begin{tabular}{@{}lccc@{}}
\toprule
\textbf{Method} & \textbf{Missingness Handling} & \textbf{MIMIC-III FQE} & \textbf{AUROC} \\
\midrule
Mean Imputation + LSTM & Impute to mean & 0.483 & 0.833 \\
Forward Fill + LSTM & Sample-and-hold & 0.488 & 0.838 \\
GRU-D~\citep{che2018recurrent} & Implicit time decay & 0.508 & 0.844 \\
BRITS~\citep{cao2018brits} & Bidirectional imputation & 0.516 & 0.852 \\
mTAND~\citep{shukla2021multitime} & Multi-time attention & 0.523 & 0.858 \\
MedDreamer & World model + AFI & 0.583 & 0.867 \\
\textbf{OPL-MT-MNAR} & \textbf{Explicit MNAR + DocProcess + Text} & \textbf{0.679} & \textbf{0.886} \\
\bottomrule
\end{tabular}
\end{adjustbox}
\caption{Appendix benchmark against stronger missingness-handling encoders on MIMIC-III. Explicit MNAR features and documentation-process modeling improve both policy value and representation quality beyond strong irregular-sampling baselines.}
\label{tab:missingness_appendix}
\end{table*}

\paragraph{Random Policy.}
Uniform distribution over the 9 actions at each step. Provides a lower bound on policy performance.

\paragraph{Zero-Treatment.}
Always selects action $(0, 0)$: no fluids and no vasopressors. Tests whether any treatment is beneficial on average.

\paragraph{Clinician (Behavioral Cloning).}
A policy trained to imitate clinician actions via supervised learning:
\begin{equation}
\mathcal{L}_{\text{BC}} = -\mathbb{E}_{(s, a) \sim \mathcal{D}}[\log \pi_{\text{BC}}(a | s)].
\end{equation}
Uses the same encoder architecture as OPL-MT-MNAR.

\paragraph{Continuous State-Space DDQN / Model-Based BNN Planner~\citep{raghu2017continuous,raghu2017modelbased}.}
These baselines represent the early deep-RL sepsis line: model-free DDQN-style control, continuous-state deep RL, and model-based BNN planning. They are useful historically because they established the 4-hour sepsis RL protocol, but they do not model observation-process MNAR.

\paragraph{Peng et al.\ (2018)~\citep{peng2018improving}.}
A hybrid kernel + deep RL baseline that mixes local similarity structure with learned value estimates. It is a strong pre-offline-RL comparator on MIMIC-III-style sepsis benchmarks.

\paragraph{DDPG with Clinician Supervision~\citep{huang2022continuous}.}
A continuous-action sepsis RL baseline. It is a direct comparator for the claim that continuous control alone does not solve the MNAR observation problem.

\paragraph{SBCQ~\citep{fatemi2022semimarkov}.}
A Semi-MDP / irregular-interval offline RL baseline. It is useful for isolating whether irregular-time handling alone can explain the gains of the proposed observation-process model.

\paragraph{MedDreamer~\citep{xu2025meddreamer}.}
A model-based healthcare RL baseline with latent planning. Appendix~\ref{app:subgroup} additionally reports the high-severity comparison where world-model compounding error is most visible.

\paragraph{Implementation Notes.}
All methods are compared under their canonical action-space and decision-interval settings in the primary result tables. Cross-setting robustness is reported separately in Table~\ref{tab:action_granularity} and Table~\ref{tab:delta_sweep}, rather than forcing a single action/time discretization onto every baseline.

\section{Ablation Studies}
\label{app:ablation}

The controlled studies in this section are grouped by topic for clarity.

\subsection{Scope of the Additional Ablations}

\begin{itemize}[leftmargin=*, labelindent=0pt]
    \item \textbf{Documentation-process ablations:} Table~\ref{tab:docprocess_ablation} and Table~\ref{tab:docprocess_regimes} quantify the contribution of the documentation-process embedding and GRU factor.
    \item \textbf{Temporal granularity controls:} Table~\ref{tab:delta_sweep} reports the 1h / 2h / 4h / 8h decision-interval sweep.
    \item \textbf{Action granularity controls:} Table~\ref{tab:action_granularity} compares 3$\times$3, 5$\times$5, and continuous action choices.
    \item \textbf{High-frequency text controls:} Table~\ref{tab:gap_stratified}, Table~\ref{tab:attention_shift}, and Table~\ref{tab:content_evolution} show when nursing notes matter most in MIMIC-III.
\end{itemize}

\subsection{Takeaway}

Taken together, these controlled studies support the same conclusion as Table~\ref{tab:building_blocks}: the dominant gain comes from preserving observation-process signal, while discretization choice and irregular-time handling act as complementary but secondary factors.

\section{Additional Subgroup Analysis}
\label{app:subgroup}

Table~\ref{tab:subgroup_full} expands the severity discussion with the explicit low-SOFA versus high-SOFA comparison.

\begin{table}[H]
\centering
\normalsize
\setlength{\tabcolsep}{5pt}
\begin{adjustbox}{max width=\linewidth}
\begin{tabular}{@{}lccc@{}}
\toprule
\textbf{Method} & \textbf{Low SOFA} & \textbf{\shortstack{High SOFA\\($>10$)}} & \textbf{$\Delta$} \\
\midrule
Clinician & 0.681 & 0.192 & -- \\
MedDreamer & 0.726 & 0.296 & +54.2\% \\
\textbf{OPL-MT-MNAR} & \textbf{0.763} & \textbf{0.344} & \textbf{+79.2\%} \\
\bottomrule
\end{tabular}
\end{adjustbox}
\caption{Severity-focused subgroup comparison on MIMIC-III. The relative advantage is largest in the highest-acuity subgroup, where measurement and documentation intensity are most endogenous.}
\label{tab:subgroup_full}
\end{table}

This subgroup pattern is consistent with the proposed mechanism: observation-process signals are most informative when patients deteriorate and clinicians correspondingly measure and document more aggressively.

\section{Clinical Analysis Details}
\label{app:clinical_analysis}

\subsection{Constrained Policy Optimization}

To test whether richer state representations remain useful under explicit safety constraints, we add a constrained IQL variant with a Lagrangian penalty:
\begin{equation}
\begin{aligned}
\bar{C} &= \frac{1}{H}\sum_{h=1}^{H} C(s_h, a_h), \\
\mathcal{L}_{\text{constrained}}
&= \mathcal{L}_{\mathrm{IQL}} + \lambda \big(\mathbb{E}[\bar{C}] - \kappa\big),
\end{aligned}
\end{equation}
with $\kappa = 0.10$. We define $C(s_h,a_h)=1$ when the action violates rule-based Surviving Sepsis Campaign constraints (e.g., high-dose vasopressors before adequate fluids, or continued aggressive fluids after stabilization).
Table~\ref{tab:constraints} compares the unconstrained and constrained variants directly.

\begin{table}[H]
\centering
\small
\begin{adjustbox}{max width=\linewidth}
\begin{tabular}{@{}lccc@{}}
\toprule
\textbf{Method} & \textbf{Constrained} & \textbf{FQE} & \textbf{Guideline Viol. (\%)} \\
\midrule
Clinician & -- & 0.528 & 7.9 \\
IQL (Structured Only) & $\times$ & 0.591 & 18.2 \\
IQL (OPL-MT-MNAR) & $\times$ & 0.679 & 13.1 \\
C-IQL (Structured Only) & $\checkmark$ & 0.582 & 7.8 \\
\textbf{C-IQL (OPL-MT-MNAR)} & $\checkmark$ & \textbf{0.662} & \textbf{5.3} \\
\bottomrule
\end{tabular}
\end{adjustbox}

\caption{Constrained policy optimization on MIMIC-III. Richer MNAR-aware states remain beneficial even after adding explicit guideline constraints.}
\label{tab:constraints}
\end{table}

The constrained variant reduces guideline violations below the clinician baseline while preserving a substantial value advantage, supporting the view that better state estimation and explicit safety constraints are complementary rather than redundant.

\subsection{Text Attention Interpretability}

Table~\ref{tab:text_attention} lists representative high-attention phrases from the text branch and the clinical contexts they correspond to.

\begin{table*}[!tp]
\centering
\small
\begin{tabular}{@{}lcc@{}}
\toprule
\textbf{Keyword / Phrase} & \textbf{Attention} & \textbf{Clinical Context} \\
\midrule
worsening hemodynamic instability & 0.231 & Cardiovascular deterioration \\
new onset respiratory distress & 0.198 & Pulmonary decompensation \\
increased vasopressor requirements & 0.172 & Septic shock progression \\
positive blood cultures & 0.143 & Infection severity marker \\
\bottomrule
\end{tabular}
\caption{Representative high-attention nursing-note phrases (MIMIC-III).}
\label{tab:text_attention}
\end{table*}

A representative case is the note ``worsening hemodynamic instability,'' which receives high attention on deterioration tokens, a high gate value, and a corresponding increase in vasopressor intensity under the learned policy.

\subsection{Time-to-Deterioration Analysis}

We also evaluate the auxiliary head in a time-to-event setting by replacing the binary outcome head with a discrete-time hazard model over the same 72-hour ICU window. Table~\ref{tab:survival} summarizes the resulting short-horizon and longer-horizon deterioration metrics on MIMIC-III.

\begin{table}[H]
\centering
\normalsize
\setlength{\tabcolsep}{4pt}
\begin{adjustbox}{max width=\linewidth}
\begin{tabular}{@{}p{0.36\linewidth}cccc@{}}
\toprule
\textbf{Method} & \textbf{C-index} & \textbf{AOC@12h} & \textbf{AOC@24h} & \textbf{AOC@48h} \\
\midrule
Cox PH & 0.687 & 0.718 & 0.703 & 0.691 \\
Dynamic-DeepHit & 0.738 & 0.769 & 0.751 & 0.734 \\
OPL-MT-MNAR (Structured Only) & 0.769 & 0.801 & 0.781 & 0.758 \\
\textbf{OPL-MT-MNAR} & \textbf{0.821} & \textbf{0.859} & \textbf{0.832} & \textbf{0.801} \\
\bottomrule
\end{tabular}
\end{adjustbox}
\caption{Time-to-deterioration analysis on MIMIC-III. The MNAR-aware state is especially helpful for short-horizon deterioration prediction, where fresh documentation is most informative.}
\label{tab:survival}
\end{table}

The largest gain appears at 12 hours, which is consistent with the broader text-MNAR story: documentation-process signals are most useful for near-term changes in acuity.

\section{Computational Analysis}
\label{app:compute}

\paragraph{Model Size.}
Full model with text fusion: 1.29M parameters. Removing text fusion reduces to 1.05M parameters.

\paragraph{Training Time.}
On NVIDIA A100 (40GB): Stage 1 pre-training $\sim$4h, Stage 2 frozen RL $\sim$5h, Stage 3 fine-tuning $\sim$2.5h. Total: $\sim$11.5h per run. Removing text fusion reduces total training time by $\sim$19\% to 9.2h.

\paragraph{Inference.}
Batch inference (256 patients): 14.2ms latency, 8.4GB memory. This translates to sub-millisecond per-patient decisions, well within real-time clinical requirements.

\section{Extended Related Work}
\label{app:related}

% \subsection{Reinforcement Learning for Critical Care}

\subsection{Clinical Reinforcement Learning, Offline RL, and Off-Policy Evaluation}

Reinforcement learning for critical care has grown substantially since the AI Clinician \citep{komorowski2018artificial}, which established a common sepsis-RL setup based on 4-hour decision intervals, mortality-related rewards, and off-policy evaluation from observational ICU data. Follow-up work explored continuous state representations \citep{raghu2017continuous}, model-based approaches \citep{raghu2017modelbased}, and improved treatment policies under heterogeneous patient responses \citep{peng2018improving,tang2022model,huang2022continuous,sun2025exploring}. More broadly, offline RL has developed methods for stable policy learning under distribution shift, including Batch-Constrained Q-learning (BCQ) \citep{fujimoto2019off}, Conservative Q-Learning (CQL) \citep{kumar2020conservative}, AWAC \citep{nair2020awac}, and Implicit Q-Learning (IQL) \citep{kostrikov2022offline}. In parallel, off-policy evaluation (OPE) has provided tools such as importance sampling (IS) \citep{precup2000eligibility}, Per-decision importance sampling (PDIS) \citep{thomas2016data}, doubly robust estimators \citep{jiang2016doubly}, and bootstrap-based uncertainty quantification for assessing policies without deployment \citep{hanna2017bootstrapping}. In healthcare, these methods are especially important because online exploration is costly or infeasible. Our work adopts this offline RL and OPE perspective, but differs in learning patient states from partial multimodal observations whose missingness patterns are themselves informative.

\subsection{Missing Data in Clinical Time Series}

Missing data are pervasive in clinical time series and are often MNAR, since measurement and documentation depend on latent patient severity and clinician behavior \citep{little2019statistical,weiskopf2013methods,agniel2018biases}. In structured clinical time series, GRU-D \citep{che2018recurrent}, BRITS \citep{cao2018brits}, direct missingness modeling \citep{lipton2016directly}, and Raindrop \citep{zhang2022raindrop} account for irregular sampling and missingness, but largely focus on prediction rather than sequential decision-making. In multimodal EHR settings, methods such as M3Care \citep{zhang2022m3care}, MissModal \citep{lin2023missmodal}, DrFuse \citep{yao2024drfuse}, and MUSE \citep{wu2024muse} address missing modalities through fusion, disentanglement, or robustness mechanisms. Our setting differs in two ways. First, we focus on missingness that is endogenously driven by unobserved factors \citep{xiong2023,duan2024factor,duan2024target,li2024learning,li2024two,li2025generalizing,chen2026partial}. Second, rather than treating missingness only as a nuisance to accommodate, we use temporally evolving observation patterns across structured data and clinical text as signals for learning patient state. The most closely related work is \citet{liang2025causal}, which explicitly models informative missingness in multimodal EHR, but without temporal dynamics or policy learning.

\end{document}

%% file: diagram.tex
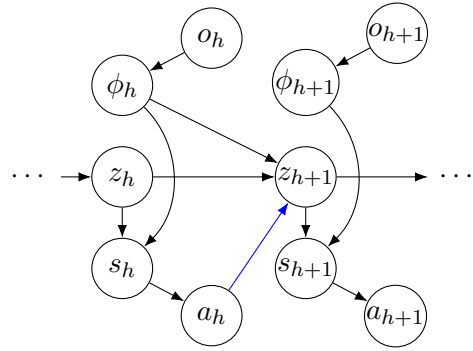
\begin{figure}[ht!]
    \centering
    % \vspace{-0.1cm} % tighten vertical spacing if needed
\begin{tikzpicture}[
    >=Latex,
    node distance=18mm,
    every node/.style={circle,draw,minimum size=8mm,inner sep=0pt},
    bend angle=18
]

\node (z) {$z_{h}$};
\node (phi) [above=4mm of z] {$\phi_h$};
\node (o) [above right=0.5mm and 6mm of phi] {$o_h$};
\node (s) [below=4mm of z] {$s_h$};
\node (a) [below right=0.5mm and 6mm of s] {$a_h$};

\node (z1) [right=16mm of z] {$z_{h+1}$};
\node (phi1) [above=4mm of z1] {$\phi_{h+1}$};
\node (o1) [above right=0.5mm and 6mm of phi1] {$o_{h+1}$};
\node (s1) [below=4mm of z1] {$s_{h+1}$};
\node (a1) [below right=0.5mm and 6mm of s1] {$a_{h+1}$};
\node[draw=none, shape=rectangle, inner sep=1pt] (dots1) [right=12mm of z1] {$\cdots$};

\node[draw=none] (past) [left=4mm of z] {$\cdots$};
\draw[->] (past) -- (z);

% Edges
\draw[->] (o) -- (phi);
\draw[->] (o1) -- (phi1);

\draw[->] (z) -- (z1);

\draw[->] (phi) to[bend left=45] (s);
% \draw[->, bend right=30] (phi) -- (s);
% \draw[->] (phi1) -- (s1);

\draw[->] (phi) -- (z1);

\draw[->] (z) -- (s);
\draw[->] (z1) -- (s1);

\draw[->] (s) -- (a);
\draw[->] (s1) -- (a1);

\draw[->, blue] (a) -- (z1);
\draw[->] (phi1) to[bend left=45] (s1);

\draw[->] (z1) -- (dots1);

% \draw[->] (h) -- (x);          % h -> X
% \draw[->] (h) -- (y);          % h -> Y
% \draw[->] (h) -- (d);          % delta -> h
% \draw[->] (x) -- (d);
% \draw[->, bend right=18] (d) to (y); 

\end{tikzpicture}
    % \vspace{-0.75cm}
    \caption{Causal diagram for patient state learning. 
    % Solid arrows denote deterministic encoding along the recorded observation path $o_h \to \phi_h \to s_h$, while dashed arrows denote stochastic, action-conditioned latent transitions through $a_h \to z_h$. This highlights why MNAR-aware observation encoding and action-conditioned dynamics are complementary: the former enriches the endogenous observation channel, and the latter preserves multi-step credit assignment (Theorem~\ref{thm:controllability}).
    }
    \label{fig:causal}
    % \vspace{-0.6cm}
\end{figure}